\documentclass{article}

\usepackage[dblblindworkshop, final]{neurips_2026}

\usepackage[utf8]{inputenc}
\usepackage[T1]{fontenc}
\usepackage{url}

\providecommand{\doi}{}
\renewcommand{\doi}[1]{\href{https://doi.org/#1}{doi:#1}}
\usepackage{booktabs}
\usepackage{amsfonts}
\usepackage{amssymb}
\usepackage{amsmath}
\usepackage{nicefrac}
\usepackage{microtype}
\usepackage{xcolor}
\usepackage{graphicx}
\usepackage{float}
\usepackage{placeins}
\usepackage{enumitem}
\usepackage{amsthm}
\usepackage{soul}
\usepackage{hyperref}

\theoremstyle{plain}
\newtheorem{proposition}{Proposition}

\theoremstyle{definition}
\newtheorem{definition}{Definition}

\theoremstyle{remark}
\newtheorem{remark}{Remark}

\providecommand{\SSd}[2]{\csname SSdisc#1u#2\endcsname}
\providecommand{\SSarg}[3]{\csname SSarg#1#2u#3\endcsname}
\providecommand{\SSrng}[3]{\csname SSrng#1#2u#3\endcsname}
\providecommand{\SSlgt}[3]{\csname SSlgt#1#2u#3\endcsname}
\expandafter\def\csname SSdisc256u05\endcsname{9.5}
\expandafter\def\csname SSdisc256u09\endcsname{1.9}
\expandafter\def\csname SSdisc256u099\endcsname{0.9}
\expandafter\def\csname SSdisc1024u05\endcsname{16.1}
\expandafter\def\csname SSdisc1024u09\endcsname{6.5}
\expandafter\def\csname SSdisc1024u099\endcsname{1.9}
\expandafter\def\csname SSdisc4096u05\endcsname{37.1}
\expandafter\def\csname SSdisc4096u09\endcsname{11.3}
\expandafter\def\csname SSdisc4096u099\endcsname{0.8}
\expandafter\def\csname SSdisc16384u05\endcsname{86.0}
\expandafter\def\csname SSdisc16384u09\endcsname{28.7}
\expandafter\def\csname SSdisc16384u099\endcsname{2.0}
\expandafter\def\csname SSdisc65536u05\endcsname{186.4}
\expandafter\def\csname SSdisc65536u09\endcsname{71.4}
\expandafter\def\csname SSdisc65536u099\endcsname{5.6}

\expandafter\def\csname SSargEnergyHalf256u05\endcsname{1.5}
\expandafter\def\csname SSrngEnergyHalf256u05\endcsname{$9.5$}
\expandafter\def\csname SSlgtEnergyHalf256u05\endcsname{$0.5$}
\expandafter\def\csname SSargEnergyOne256u05\endcsname{1.5}
\expandafter\def\csname SSrngEnergyOne256u05\endcsname{$1.2\times 10^{3}$}
\expandafter\def\csname SSlgtEnergyOne256u05\endcsname{$0.4$}
\expandafter\def\csname SSargEnergyOneHalf256u05\endcsname{1.8}
\expandafter\def\csname SSrngEnergyOneHalf256u05\endcsname{$5.2\times 10^{5}$}
\expandafter\def\csname SSlgtEnergyOneHalf256u05\endcsname{$0.0$}
\expandafter\def\csname SSargSliced256u05\endcsname{3.5}
\expandafter\def\csname SSrngSliced256u05\endcsname{$8.6\times 10^{3}$}
\expandafter\def\csname SSlgtSliced256u05\endcsname{$-0.3$}
\expandafter\def\csname SSargHill256u05\endcsname{1.5}
\expandafter\def\csname SSrngHill256u05\endcsname{$2.9$}
\expandafter\def\csname SSlgtHill256u05\endcsname{$1.3$}
\expandafter\def\csname SSargEnergyHalf256u09\endcsname{1.5}
\expandafter\def\csname SSrngEnergyHalf256u09\endcsname{$1.9$}
\expandafter\def\csname SSlgtEnergyHalf256u09\endcsname{$0.1$}
\expandafter\def\csname SSargEnergyOne256u09\endcsname{1.5}
\expandafter\def\csname SSrngEnergyOne256u09\endcsname{$82.0$}
\expandafter\def\csname SSlgtEnergyOne256u09\endcsname{$0.1$}
\expandafter\def\csname SSargEnergyOneHalf256u09\endcsname{1.8}
\expandafter\def\csname SSrngEnergyOneHalf256u09\endcsname{$4.9\times 10^{3}$}
\expandafter\def\csname SSlgtEnergyOneHalf256u09\endcsname{$-0.3$}
\expandafter\def\csname SSargSliced256u09\endcsname{1.8}
\expandafter\def\csname SSrngSliced256u09\endcsname{$182.2$}
\expandafter\def\csname SSlgtSliced256u09\endcsname{$-0.3$}
\expandafter\def\csname SSargHill256u09\endcsname{1.2}
\expandafter\def\csname SSrngHill256u09\endcsname{$1.1$}
\expandafter\def\csname SSlgtHill256u09\endcsname{$0.6$}
\expandafter\def\csname SSargEnergyHalf256u099\endcsname{3.5}
\expandafter\def\csname SSrngEnergyHalf256u099\endcsname{$0.9$}
\expandafter\def\csname SSlgtEnergyHalf256u099\endcsname{$-0.0$}
\expandafter\def\csname SSargEnergyOne256u099\endcsname{3.5}
\expandafter\def\csname SSrngEnergyOne256u099\endcsname{$21.3$}
\expandafter\def\csname SSlgtEnergyOne256u099\endcsname{$-0.1$}
\expandafter\def\csname SSargEnergyOneHalf256u099\endcsname{1.8}
\expandafter\def\csname SSrngEnergyOneHalf256u099\endcsname{$973.6$}
\expandafter\def\csname SSlgtEnergyOneHalf256u099\endcsname{$-0.6$}
\expandafter\def\csname SSargSliced256u099\endcsname{1.8}
\expandafter\def\csname SSrngSliced256u099\endcsname{$62.6$}
\expandafter\def\csname SSlgtSliced256u099\endcsname{$-0.4$}
\expandafter\def\csname SSargHill256u099\endcsname{3.5}
\expandafter\def\csname SSrngHill256u099\endcsname{$0.5$}
\expandafter\def\csname SSlgtHill256u099\endcsname{$-0.2$}
\expandafter\def\csname SSargEnergyHalf1024u05\endcsname{1.5}
\expandafter\def\csname SSrngEnergyHalf1024u05\endcsname{$16.1$}
\expandafter\def\csname SSlgtEnergyHalf1024u05\endcsname{$2.2$}
\expandafter\def\csname SSargEnergyOne1024u05\endcsname{1.5}
\expandafter\def\csname SSrngEnergyOne1024u05\endcsname{$764.5$}
\expandafter\def\csname SSlgtEnergyOne1024u05\endcsname{$2.3$}
\expandafter\def\csname SSargEnergyOneHalf1024u05\endcsname{1.5}
\expandafter\def\csname SSrngEnergyOneHalf1024u05\endcsname{$4.1\times 10^{5}$}
\expandafter\def\csname SSlgtEnergyOneHalf1024u05\endcsname{$1.7$}
\expandafter\def\csname SSargSliced1024u05\endcsname{2.8}
\expandafter\def\csname SSrngSliced1024u05\endcsname{$3.7\times 10^{3}$}
\expandafter\def\csname SSlgtSliced1024u05\endcsname{$0.0$}
\expandafter\def\csname SSargHill1024u05\endcsname{1.5}
\expandafter\def\csname SSrngHill1024u05\endcsname{$6.1$}
\expandafter\def\csname SSlgtHill1024u05\endcsname{$3.8$}
\expandafter\def\csname SSargEnergyHalf1024u09\endcsname{3.5}
\expandafter\def\csname SSrngEnergyHalf1024u09\endcsname{$6.5$}
\expandafter\def\csname SSlgtEnergyHalf1024u09\endcsname{$-0.4$}
\expandafter\def\csname SSargEnergyOne1024u09\endcsname{2.8}
\expandafter\def\csname SSrngEnergyOne1024u09\endcsname{$2.9\times 10^{4}$}
\expandafter\def\csname SSlgtEnergyOne1024u09\endcsname{$-0.0$}
\expandafter\def\csname SSargEnergyOneHalf1024u09\endcsname{2.2}
\expandafter\def\csname SSrngEnergyOneHalf1024u09\endcsname{$5.9\times 10^{7}$}
\expandafter\def\csname SSlgtEnergyOneHalf1024u09\endcsname{$0.1$}
\expandafter\def\csname SSargSliced1024u09\endcsname{2.2}
\expandafter\def\csname SSrngSliced1024u09\endcsname{$1.4\times 10^{5}$}
\expandafter\def\csname SSlgtSliced1024u09\endcsname{$-0.1$}
\expandafter\def\csname SSargHill1024u09\endcsname{1.2}
\expandafter\def\csname SSrngHill1024u09\endcsname{$2.9$}
\expandafter\def\csname SSlgtHill1024u09\endcsname{$1.4$}
\expandafter\def\csname SSargEnergyHalf1024u099\endcsname{3.5}
\expandafter\def\csname SSrngEnergyHalf1024u099\endcsname{$1.9$}
\expandafter\def\csname SSlgtEnergyHalf1024u099\endcsname{$-0.6$}
\expandafter\def\csname SSargEnergyOne1024u099\endcsname{2.8}
\expandafter\def\csname SSrngEnergyOne1024u099\endcsname{$3.4\times 10^{3}$}
\expandafter\def\csname SSlgtEnergyOne1024u099\endcsname{$-0.4$}
\expandafter\def\csname SSargEnergyOneHalf1024u099\endcsname{2.2}
\expandafter\def\csname SSrngEnergyOneHalf1024u099\endcsname{$2.4\times 10^{6}$}
\expandafter\def\csname SSlgtEnergyOneHalf1024u099\endcsname{$-0.1$}
\expandafter\def\csname SSargSliced1024u099\endcsname{2.2}
\expandafter\def\csname SSrngSliced1024u099\endcsname{$1.6\times 10^{4}$}
\expandafter\def\csname SSlgtSliced1024u099\endcsname{$-0.1$}
\expandafter\def\csname SSargHill1024u099\endcsname{1.0}
\expandafter\def\csname SSrngHill1024u099\endcsname{$0.9$}
\expandafter\def\csname SSlgtHill1024u099\endcsname{$-0.2$}
\expandafter\def\csname SSargEnergyHalf4096u05\endcsname{1.5}
\expandafter\def\csname SSrngEnergyHalf4096u05\endcsname{$37.1$}
\expandafter\def\csname SSlgtEnergyHalf4096u05\endcsname{$4.1$}
\expandafter\def\csname SSargEnergyOne4096u05\endcsname{1.5}
\expandafter\def\csname SSrngEnergyOne4096u05\endcsname{$328.2$}
\expandafter\def\csname SSlgtEnergyOne4096u05\endcsname{$4.3$}
\expandafter\def\csname SSargEnergyOneHalf4096u05\endcsname{1.5}
\expandafter\def\csname SSrngEnergyOneHalf4096u05\endcsname{$5.9\times 10^{4}$}
\expandafter\def\csname SSlgtEnergyOneHalf4096u05\endcsname{$2.0$}
\expandafter\def\csname SSargSliced4096u05\endcsname{1.8}
\expandafter\def\csname SSrngSliced4096u05\endcsname{$1.3\times 10^{3}$}
\expandafter\def\csname SSlgtSliced4096u05\endcsname{$0.2$}
\expandafter\def\csname SSargHill4096u05\endcsname{1.5}
\expandafter\def\csname SSrngHill4096u05\endcsname{$12.6$}
\expandafter\def\csname SSlgtHill4096u05\endcsname{$6.8$}
\expandafter\def\csname SSargEnergyHalf4096u09\endcsname{1.8}
\expandafter\def\csname SSrngEnergyHalf4096u09\endcsname{$11.3$}
\expandafter\def\csname SSlgtEnergyHalf4096u09\endcsname{$0.7$}
\expandafter\def\csname SSargEnergyOne4096u09\endcsname{1.8}
\expandafter\def\csname SSrngEnergyOne4096u09\endcsname{$365.3$}
\expandafter\def\csname SSlgtEnergyOne4096u09\endcsname{$1.3$}
\expandafter\def\csname SSargEnergyOneHalf4096u09\endcsname{1.5}
\expandafter\def\csname SSrngEnergyOneHalf4096u09\endcsname{$1.3\times 10^{5}$}
\expandafter\def\csname SSlgtEnergyOneHalf4096u09\endcsname{$0.6$}
\expandafter\def\csname SSargSliced4096u09\endcsname{3.5}
\expandafter\def\csname SSrngSliced4096u09\endcsname{$2.0\times 10^{3}$}
\expandafter\def\csname SSlgtSliced4096u09\endcsname{$-0.1$}
\expandafter\def\csname SSargHill4096u09\endcsname{1.5}
\expandafter\def\csname SSrngHill4096u09\endcsname{$6.1$}
\expandafter\def\csname SSlgtHill4096u09\endcsname{$3.1$}
\expandafter\def\csname SSargEnergyHalf4096u099\endcsname{1.2}
\expandafter\def\csname SSrngEnergyHalf4096u099\endcsname{$0.8$}
\expandafter\def\csname SSlgtEnergyHalf4096u099\endcsname{$-0.1$}
\expandafter\def\csname SSargEnergyOne4096u099\endcsname{2.8}
\expandafter\def\csname SSrngEnergyOne4096u099\endcsname{$48.4$}
\expandafter\def\csname SSlgtEnergyOne4096u099\endcsname{$-0.1$}
\expandafter\def\csname SSargEnergyOneHalf4096u099\endcsname{3.5}
\expandafter\def\csname SSrngEnergyOneHalf4096u099\endcsname{$4.9\times 10^{3}$}
\expandafter\def\csname SSlgtEnergyOneHalf4096u099\endcsname{$-0.1$}
\expandafter\def\csname SSargSliced4096u099\endcsname{3.5}
\expandafter\def\csname SSrngSliced4096u099\endcsname{$202.0$}
\expandafter\def\csname SSlgtSliced4096u099\endcsname{$-0.2$}
\expandafter\def\csname SSargHill4096u099\endcsname{1.2}
\expandafter\def\csname SSrngHill4096u099\endcsname{$2.2$}
\expandafter\def\csname SSlgtHill4096u099\endcsname{$0.6$}
\expandafter\def\csname SSargEnergyHalf16384u05\endcsname{1.5}
\expandafter\def\csname SSrngEnergyHalf16384u05\endcsname{$86.0$}
\expandafter\def\csname SSlgtEnergyHalf16384u05\endcsname{$10.8$}
\expandafter\def\csname SSargEnergyOne16384u05\endcsname{1.5}
\expandafter\def\csname SSrngEnergyOne16384u05\endcsname{$672.3$}
\expandafter\def\csname SSlgtEnergyOne16384u05\endcsname{$9.8$}
\expandafter\def\csname SSargEnergyOneHalf16384u05\endcsname{1.5}
\expandafter\def\csname SSrngEnergyOneHalf16384u05\endcsname{$9.9\times 10^{5}$}
\expandafter\def\csname SSlgtEnergyOneHalf16384u05\endcsname{$6.2$}
\expandafter\def\csname SSargSliced16384u05\endcsname{1.5}
\expandafter\def\csname SSrngSliced16384u05\endcsname{$6.2\times 10^{4}$}
\expandafter\def\csname SSlgtSliced16384u05\endcsname{$1.0$}
\expandafter\def\csname SSargHill16384u05\endcsname{1.5}
\expandafter\def\csname SSrngHill16384u05\endcsname{$29.5$}
\expandafter\def\csname SSlgtHill16384u05\endcsname{$17.7$}
\expandafter\def\csname SSargEnergyHalf16384u09\endcsname{1.5}
\expandafter\def\csname SSrngEnergyHalf16384u09\endcsname{$28.7$}
\expandafter\def\csname SSlgtEnergyHalf16384u09\endcsname{$3.5$}
\expandafter\def\csname SSargEnergyOne16384u09\endcsname{1.5}
\expandafter\def\csname SSrngEnergyOne16384u09\endcsname{$2.5\times 10^{3}$}
\expandafter\def\csname SSlgtEnergyOne16384u09\endcsname{$4.8$}
\expandafter\def\csname SSargEnergyOneHalf16384u09\endcsname{1.5}
\expandafter\def\csname SSrngEnergyOneHalf16384u09\endcsname{$3.7\times 10^{6}$}
\expandafter\def\csname SSlgtEnergyOneHalf16384u09\endcsname{$1.5$}
\expandafter\def\csname SSargSliced16384u09\endcsname{2.2}
\expandafter\def\csname SSrngSliced16384u09\endcsname{$2.7\times 10^{4}$}
\expandafter\def\csname SSlgtSliced16384u09\endcsname{$-0.5$}
\expandafter\def\csname SSargHill16384u09\endcsname{1.5}
\expandafter\def\csname SSrngHill16384u09\endcsname{$12.0$}
\expandafter\def\csname SSlgtHill16384u09\endcsname{$6.9$}
\expandafter\def\csname SSargEnergyHalf16384u099\endcsname{2.8}
\expandafter\def\csname SSrngEnergyHalf16384u099\endcsname{$2.0$}
\expandafter\def\csname SSlgtEnergyHalf16384u099\endcsname{$0.7$}
\expandafter\def\csname SSargEnergyOne16384u099\endcsname{2.8}
\expandafter\def\csname SSrngEnergyOne16384u099\endcsname{$508.8$}
\expandafter\def\csname SSlgtEnergyOne16384u099\endcsname{$0.7$}
\expandafter\def\csname SSargEnergyOneHalf16384u099\endcsname{2.2}
\expandafter\def\csname SSrngEnergyOneHalf16384u099\endcsname{$1.8\times 10^{5}$}
\expandafter\def\csname SSlgtEnergyOneHalf16384u099\endcsname{$-0.5$}
\expandafter\def\csname SSargSliced16384u099\endcsname{2.2}
\expandafter\def\csname SSrngSliced16384u099\endcsname{$2.8\times 10^{3}$}
\expandafter\def\csname SSlgtSliced16384u099\endcsname{$-0.5$}
\expandafter\def\csname SSargHill16384u099\endcsname{1.5}
\expandafter\def\csname SSrngHill16384u099\endcsname{$3.4$}
\expandafter\def\csname SSlgtHill16384u099\endcsname{$1.6$}
\expandafter\def\csname SSargEnergyHalf65536u05\endcsname{1.5}
\expandafter\def\csname SSrngEnergyHalf65536u05\endcsname{$186.4$}
\expandafter\def\csname SSlgtEnergyHalf65536u05\endcsname{$22.8$}
\expandafter\def\csname SSargEnergyOne65536u05\endcsname{1.5}
\expandafter\def\csname SSrngEnergyOne65536u05\endcsname{$2.0\times 10^{3}$}
\expandafter\def\csname SSlgtEnergyOne65536u05\endcsname{$25.0$}
\expandafter\def\csname SSargEnergyOneHalf65536u05\endcsname{1.5}
\expandafter\def\csname SSrngEnergyOneHalf65536u05\endcsname{$9.7\times 10^{6}$}
\expandafter\def\csname SSlgtEnergyOneHalf65536u05\endcsname{$15.8$}
\expandafter\def\csname SSargSliced65536u05\endcsname{1.8}
\expandafter\def\csname SSrngSliced65536u05\endcsname{$1.2\times 10^{5}$}
\expandafter\def\csname SSlgtSliced65536u05\endcsname{$-0.1$}
\expandafter\def\csname SSargHill65536u05\endcsname{1.5}
\expandafter\def\csname SSrngHill65536u05\endcsname{$67.0$}
\expandafter\def\csname SSlgtHill65536u05\endcsname{$40.6$}
\expandafter\def\csname SSargEnergyHalf65536u09\endcsname{1.5}
\expandafter\def\csname SSrngEnergyHalf65536u09\endcsname{$71.4$}
\expandafter\def\csname SSlgtEnergyHalf65536u09\endcsname{$8.8$}
\expandafter\def\csname SSargEnergyOne65536u09\endcsname{1.5}
\expandafter\def\csname SSrngEnergyOne65536u09\endcsname{$405.9$}
\expandafter\def\csname SSlgtEnergyOne65536u09\endcsname{$12.0$}
\expandafter\def\csname SSargEnergyOneHalf65536u09\endcsname{1.5}
\expandafter\def\csname SSrngEnergyOneHalf65536u09\endcsname{$9.2\times 10^{3}$}
\expandafter\def\csname SSlgtEnergyOneHalf65536u09\endcsname{$6.3$}
\expandafter\def\csname SSargSliced65536u09\endcsname{1.8}
\expandafter\def\csname SSrngSliced65536u09\endcsname{$812.7$}
\expandafter\def\csname SSlgtSliced65536u09\endcsname{$-0.6$}
\expandafter\def\csname SSargHill65536u09\endcsname{1.5}
\expandafter\def\csname SSrngHill65536u09\endcsname{$25.6$}
\expandafter\def\csname SSlgtHill65536u09\endcsname{$14.9$}
\expandafter\def\csname SSargEnergyHalf65536u099\endcsname{1.5}
\expandafter\def\csname SSrngEnergyHalf65536u099\endcsname{$5.6$}
\expandafter\def\csname SSlgtEnergyHalf65536u099\endcsname{$2.4$}
\expandafter\def\csname SSargEnergyOne65536u099\endcsname{1.5}
\expandafter\def\csname SSrngEnergyOne65536u099\endcsname{$65.5$}
\expandafter\def\csname SSlgtEnergyOne65536u099\endcsname{$3.6$}
\expandafter\def\csname SSargEnergyOneHalf65536u099\endcsname{1.8}
\expandafter\def\csname SSrngEnergyOneHalf65536u099\endcsname{$723.6$}
\expandafter\def\csname SSlgtEnergyOneHalf65536u099\endcsname{$1.6$}
\expandafter\def\csname SSargSliced65536u099\endcsname{1.8}
\expandafter\def\csname SSrngSliced65536u099\endcsname{$80.6$}
\expandafter\def\csname SSlgtSliced65536u099\endcsname{$-0.6$}
\expandafter\def\csname SSargHill65536u099\endcsname{1.5}
\expandafter\def\csname SSrngHill65536u099\endcsname{$7.7$}
\expandafter\def\csname SSlgtHill65536u099\endcsname{$5.1$}

\newcommand{\SSRefAlpha}{1.5}

\newcommand{\TransportAlphaOne}{2.50}
\newcommand{\TransportAlphaTwo}{1.76}
\newcommand{\TransportSamples}{16384}
\newcommand{\TransportArrows}{140}
\newcommand{\TransportWindowMass}{94}

\title{Attention Kernels for Learning
Maps Between Heavy-Tailed Measures}
\workshoptitle{AI for Stochastic Dynamics: From Theoretical Foundations to Scientific Applications}

\author{%
Kailen Hargenrader \\
Computing and Mathematical Sciences \\
California Institute of Technology \\
\texttt{khargenr@caltech.edu} \\
\And
Edoardo Calvello \\
Lawrence Berkeley National Laboratory \\
University of California, Berkeley \\
ICSI\\
\texttt{ecalvello@lbl.gov} \\
\And
Bohan Chen \\
Computing and Mathematical Sciences \\
California Institute of Technology \\
\texttt{bhchen@caltech.edu} \\
}

\begin{document}

\maketitle

\begin{abstract}
Operator learning on probability measures can be accomplished with transformers. For measures with polynomial tails, the exponential weighting in softmax can make the corresponding measure-level attention integrals diverge. This motivates replacing the exponential with slower-growing functions. We construct two benchmarks for operator learning on measures with closed-form targets. We use these benchmarks to study attention kernel growth and data transformation in post-norm transformers. Without data transformation, the softmax models exhibit ensemble collapse on both heavy-tailed benchmarks, while the three slower-growing kernels avoid collapse. Symlog preprocessing allows softmax to avoid collapse on the matrix inverse task but not on the sheared swap task. On the Gaussian control, all four kernels perform similarly. We also examine how sample size affects the sensitivity of empirical energy and Wasserstein distances to tail differences. These results support slower-growing attention kernels as an effective design choice for post-norm transformers learning from heavy-tailed ensembles.
\end{abstract}

\section{Introduction}
\label{sec:intro}

In Bayesian inference, learning a map from prior ensembles to posterior ensembles is an example of operator learning on probability measures. Drawing a posterior ensemble often requires multiple calls to the forward model, a cost that can be amortized by a learned surrogate. Transformers are suitable surrogates for these maps because they are composed of softmax attention layers, which are naturally interpretable as operators on measures \citep{vuckovic2020mathematical}. Permutation-equivariant transformers, such as non-causal transformers with no positional encoding, correspond to maps on unordered ensembles sampled from measures, enabling the action of the transformer on ensembles to be formulated via empirical measures.

Applications of Bayesian inference, including
weather forecasting, finance, and turbulent flow, can involve heavy-tailed probability measures. In these domains, extreme values can represent important events, such as severe weather, defaults, or velocity spikes. These applications motivate studying transformers as operators on \emph{heavy-tailed} probability measures.

Written as an operator on measures, softmax attention is a ratio of integrals with exponential weights. Its integrability depends on the exponential moments of the input measure along the score directions \citep{vuckovic2020mathematical,bach2026enhanced}. For the mirrored Pareto family studied here, a nonzero linear score makes the normalizing integral diverge. Replacing the exponential by a slower-growing positive function weakens the required tail integrability and motivates our empirical study of kernel choice.

\subsection{Related work}
\label{sec:relatedwork}

We summarize two lines of research related to this work.

\paragraph{Attention as an operator on measures.}
Attention was introduced as a sequence-to-sequence mapping \citep{vaswani2017attention}, but can be interpreted as an expectation under a probability distribution over sequence indices \citep{calvello2025continuum}. This interpretation is useful to understand earlier studies that view attention as an operator on probability measures \citep{vuckovic2020mathematical}, a view that connected stacked attention dynamics to interacting-particle systems \citep{geshkovski2023clusters, geshkovski2025mathematical}. When applied to samples from an underlying reference measure, architectures must be permutation invariant or equivariant to be operators on measures. This is satisfied by construction in deep sets
and set transformer \citep{zaheer2017deep, lee2019set}. Such architectures have been used as ensemble
update and postprocessing maps in data assimilation
\citep{hohlein2024postprocessing, bach2026enhanced, bach2026probabilistic} and
can parametrize maps on the space of measures \citep{huang2025mfg, cole2026incontext}, unlike
neural operators, which learn maps between function spaces
\citep{lu2021deeponet, li2021fourier}. Other works have explored learning measure-to-measure maps to amortize computation across problem instances; see for example \citep{vandergrift2026measure, tsimpos2026conditioning}. 

\paragraph{Attention kernels.}
Other works have explored replacing the softmax kernel in attention, primarily for efficiency. Replacement kernels include $\mathrm{Sigmoid}$ \citep{ramapuram2025sigmoid}, $\mathrm{ReLU}$ \citep{wortsman2023replacing, shen2023study},
$\mathrm{ReLU}^2$ \citep{hua2022transformer}, and $x^3$ \citep{saratchandran2024rethinking}.

\subsection{Contributions}
\label{sec:contribution}

Our contributions are organized into three categories.
\begin{itemize}
    \item \textbf{Benchmarks.} We construct two operators on identifiable families of input measures. Both operators provide closed-form targets, enabling the computation of an i.i.d. sampling reference for benchmarking.
    \item \textbf{Ablations.} We study operator learning on heavy-tailed measures using post-norm transformers, ablating attention-kernel growth and data transformation.
    \item \textbf{Tail sensitivity test.} We design a test to quantify how set divergences respond to differences in distribution tails. We use this test to inform our choice of the energy distance sample estimator.
\end{itemize}

\section{Background: Attention on measures and tail integrability}
\label{sec:measureattention}

The attention map as defined in \cite{vaswani2017attention}
acts on sequences $\{u_i\}_{i=1}^{N}\subset\mathbb{R}^{d_u}$ and
$\{v_i\}_{i=1}^{L}\subset\mathbb{R}^{d_v}$, collected into matrices
$u\in\mathbb{R}^{N\times d_u}$, $v\in\mathbb{R}^{L\times d_v}$, with learnable
weights $Q\in\mathbb{R}^{d_K\times d_u}$, $K\in\mathbb{R}^{d_K\times d_v}$,
$V\in\mathbb{R}^{d_V\times d_v}$. Letting $q=uQ^{\top}$, $k=vK^{\top}$, and $z=vV^{\top}$, the scaled score matrix and attention outputs are defined as
\begin{equation}
    S = \frac{qk^{\top}}{\sqrt{d_K}},\quad
    \mathrm{attention}(u,v) = \mathrm{softmax}(S)z,\quad
    \bigl(\mathrm{softmax}(S)\bigr)_{ij}
      = \frac{\exp(S_{ij})}{\sum_{\ell=1}^{L}\exp(S_{i\ell})}.
  \label{eq:attention-discrete}
\end{equation}
Self-attention is the case $v=u$. Each output row is a normalized weighted average of the values, so a sample set enters through its empirical measure. This normalized-expectation view allows attention to be formulated directly as an operator on probability measures \citep{vuckovic2020mathematical}.

Here, $\mathcal{P}(E)$
denotes the Borel probability measures on $E$, and norms of vectors are
Euclidean. Furthermore, for a measurable map $T$, the pushforward $T_\sharp\mu$ is defined as the
law of $T(X)$ when $X$ has law $\mu$.
For a query measure $\mu\in\mathcal{P}(\mathbb{R}^{d_u})$ and a key/value measure $\nu\in\mathcal{P}(\mathbb{R}^{d_v})$, we define the attention operator on measures as
\begin{equation}
  \begin{aligned}
    \mathsf{A}(\mu,\nu) &= \mathfrak{A}(\cdot;\nu)_{\sharp}\mu,\\
    \mathfrak{A}(s;\nu) &= \frac{\int_{\mathbb{R}^{d_v}}\kappa_{Q,K}(s,y)Vy\nu(\mathrm{d}y)}{\int_{\mathbb{R}^{d_v}}\kappa_{Q,K}(s,y)\nu(\mathrm{d}y)}.
  \end{aligned}
  \label{eq:measure-attention}
\end{equation}
Here, $\mathfrak{A}$ maps one query location to its attention output and $\mathsf{A}$ is the induced pushforward on query measures; the exponential kernel $\kappa_{Q,K}(s,y)=\exp(\langle Qs,Ky\rangle/\sqrt{d_K})$ recovers softmax. As described in \cite{bach2026enhanced}, the definition in \eqref{eq:measure-attention} leads to \eqref{eq:attention-discrete} when $\mu$ and $\nu$ are empirical measures defined on the collections of atoms $\{u_i\}_{i=1}^N$ and $\{v_i\}_{i=1}^L$, respectively.

For \eqref{eq:measure-attention} to be well defined, its numerator must be absolutely integrable and its denominator positive and finite:
\begin{equation}
  \begin{aligned}
    \mathcal{N}_{\kappa}(s;\nu) &= \int_{\mathbb{R}^{d_v}}\kappa_{Q,K}(s,y)\lVert Vy\rVert_2\nu(\mathrm{d}y)<\infty,\\
    \mathcal{D}_{\kappa}(s;\nu) &= \int_{\mathbb{R}^{d_v}}\kappa_{Q,K}(s,y)\nu(\mathrm{d}y)\in(0,\infty).
  \end{aligned}
  \label{eq:attention-integrability}
\end{equation}

For softmax, finite exponential moments of $\lVert Y\rVert_2$ for $Y\sim\nu$ at every positive rate are sufficient for \eqref{eq:attention-integrability} to hold for every query and choice of weights. The mirrored Pareto family instead has an infinite normalizing integral whenever the linear score $\langle Qs,Ky\rangle$ is nonzero as a function of $y$. Appendix~\ref{app:integrability} gives the tail conditions and the argument for this family. Finite-token attention sums remain finite for finite scores, so population integrability and finite-ensemble training are distinct questions.

Replacing the exponential by a slower-growing positive function weakens the tail integrability required of $\nu$. We investigate how this choice affects learning from finite ensembles.

\section{Methods}
\label{sec:methods}
We specify the learning problem, model architecture, and training procedure.

\subsection{Setting: operator learning on measures in 2D}
\label{sec:setting}
Let $\Pi$ be a distribution over parameter instances $\omega$. Each instance $\omega\sim \Pi$ determines an input--output measure pair $(\mu_{I}(\omega),\mu_{O}(\omega))\in \mathcal{P}(\mathbb{R}^{2})\times \mathcal{P}(\mathbb{R}^{2})$, where the output measure is related to the input measure $\mu_{O}(\omega) = \Psi(\mu_{I}(\omega))$ through an operator $\Psi:\mathcal{M}_I\to\mathcal{P}(\mathbb{R}^{2})$, defined on the input family $\mathcal{M}_I=\{\mu_I(\omega):\omega\in\operatorname{supp}\Pi\}$. We use the identifiable operator--family combinations established in Appendix~\ref{app:gaussid}. When the distinction between input and output is irrelevant, a parameterized measure is denoted $\mu(\omega)$ without a subscript. We approximate $\Psi$ over a family of measures denoted by $\mathcal{M}(\mu,\Pi)$.

In practical settings, $\mu(\omega)\in \mathcal{M}(\mu, \Pi)$ is not available in closed form, but as a collection of samples. We represent each measure by a fixed number of samples $N$ and approximate $\Psi$ empirically with a learned model $\widetilde{\Psi}_\theta:\mathbb{R}^{N\times2}\to\mathbb{R}^{N\times2}$. We train $\widetilde{\Psi}_\theta$ by minimizing the following training objective with respect to $\theta$: 
\begin{equation}
  \mathcal{L}(\theta)
    = \mathbb{E}_{\omega\sim\Pi}\;
      \mathbb{E}_{X\sim{\mu_{I}(\omega)}^{\otimes N},\,
                  Y\sim{\mu_{O}(\omega)}^{\otimes N}}
      \bigl[D\bigl(\widetilde{\Psi}_{\theta}(X),\,Y\bigr)\bigr],
  \label{eq:population-risk}
\end{equation}
where $D$ denotes a set divergence, and $X = [x_1, \dots, x_N]^\top \in \mathbb{R}^{N \times 2}$ and $Y = [y_1, \dots, y_N]^\top \in \mathbb{R}^{N \times 2}$ consist of $N$ i.i.d. samples drawn in closed form from $\mu_I(\omega)$ and $\mu_O(\omega)$, respectively (denoted succinctly as $X \sim \mu_I(\omega)^{\otimes N}$ and $Y \sim \mu_O(\omega)^{\otimes N}$). The outer expectation in \eqref{eq:population-risk} averages over parameter instances, so the model learns the operator across a family of measures. Appendix~\ref{app:blocking} discusses the role of the fixed ensemble size.

\subsection{Learning problem}
\label{sec:learningproblem}
For the benchmarks in Section~\ref{sec:ablation}, the measures in $\mathcal{M}(\mu,\Pi)$ take the form $\mu(\omega) = T({\omega_T})_{\sharp}\eta({\omega_\eta})$, where $\omega = \{{\omega_T}, {\omega_\eta}\}$, $T$ is a linear transformation, and $\eta=\eta^{(1)} \otimes\eta^{(2)}$ is a product of independent 1D marginal distributions. $\Pi$ decomposes into $\Pi_T \otimes \Pi_\eta$ such that $\omega_T \sim \Pi_T$ and $\omega_\eta \sim \Pi_\eta$. In Section~\ref{sec:heavylightmarginals}, we define one heavy-tailed and one light-tailed family of distributions used for $\eta$ and specify how $\Pi_\eta$ is chosen.  In Section~\ref{sec:operators}, we define the two operators used for $\Psi$. We defer our choices of $\Pi_T$ to Appendix~\ref{app:shearparams}.

\subsubsection{Heavy-tailed and light-tailed marginals \texorpdfstring{$\eta$}{eta}}
\label{sec:heavylightmarginals}
We use mirrored Pareto marginals with polynomial tails and Gaussian marginals as a light-tailed control.

The heavy-tailed family is a \emph{mirrored Pareto} distribution.
A $\mathrm{Pareto}(\lambda,a)$ variable, with positive scale $\lambda$
and positive tail index $a$, is supported on $[\lambda,\infty)$ with survival function and density
\begin{equation}
  S_P(x)=(x/\lambda)^{-a},
  \qquad
  f_P(x)=a\lambda^{a}x^{-a-1},
  \qquad x\ge\lambda.
  \label{eq:Pareto}
\end{equation}

The Pareto density is discontinuous at the edge of its support.
Mirroring about $\lambda$ gives a continuous density supported on
$\mathbb{R}$. Let $Z\sim\mathrm{Pareto}(\lambda,a)$ and let
$\mathrm{Rad}$ be an independent Rademacher sign, equally likely to
be positive or negative. The mirrored variable $X$ and its density $f$
are
\begin{equation}
  X=\begin{cases} Z, & \mathrm{Rad}=+1,\\ 2\lambda-Z, & \mathrm{Rad}=-1,\end{cases}
  \qquad
  f(z)=\tfrac{1}{2}f_P\bigl(m(z)\bigr),
  \quad m(z)=\lambda+|z-\lambda|,
  \label{eq:mirrored}
\end{equation}
where $m$ folds either branch onto the Pareto support.
The mirrored Pareto distribution is heavy-tailed, with polynomial decay.

For light tails, we use a Gaussian distribution $\mathcal{N}(\mu_{\mathcal{N}}, \sigma^2)$. Its tail satisfies the sufficient bound in \eqref{eq:tailclass}. Appendix~\ref{app:mirrored} visualizes these distributions.

In Section~\ref{sec:ablation}, we fix $\lambda = 1$ and $\mu_{\mathcal{N}} = 1$. We allow $\omega_\eta \sim \Pi_\eta = \mathcal{U}([2,3] \times [1 + \epsilon ,2])$, with $\epsilon > 0$, to control the spread of each marginal of $\eta$, corresponding to $a$ for the mirrored Pareto distribution and $\sigma$ for the Gaussian. For the Gaussian the two ranges are exchanged, allowing the second axis to always represent the marginal with greater spread.
For the mirrored Pareto distribution, the interval $[1 + \epsilon ,2]$ gives a finite first
moment and an infinite variance. Our training
objective \eqref{eq:population-risk} uses the V-statistic energy distance \eqref{eq:energy}.
The corresponding population formula \eqref{eq:energy-pop} requires finite first
moments of both measures at the training exponent. These hold for the target
measures at every fixed parameter instance. During our ablation in Section \ref{sec:ablation}, $1 + \epsilon$ is the lowest value above $1$ afforded by 32-bit floating point precision. Appendix~\ref{app:momentcondition} proves our claim about the existence of moments for the Pareto marginal.

\subsubsection{Operators and parameter distributions}
\label{sec:operators}
We use the following two operators as benchmarks for the ablation in Section~\ref{sec:ablation}.%
\paragraph{Sheared swap operator.}
Let $M({\omega_T})$ be the product of two shear matrices as described in \eqref{eq:shearparams} and $P$ the 2D exchange matrix $\left(\begin{smallmatrix} 0 & 1 \\ 1 & 0 \end{smallmatrix}\right)$. Let the input measure $\mu_I(\omega) = M({\omega_T})_\sharp\eta({\omega_\eta})$ and the output measure $\mu_O(\omega) = \bigl(M({\omega_T})P\bigr)_\sharp\eta({\omega_\eta})$. This defines the mapping
\[
\Psi:\mu_I(\omega) \mapsto \bigl(M({\omega_T})PM^{-1}({\omega_T})\bigr)_\sharp \mu_I(\omega).
\]
\paragraph{Matrix inverse operator.}
Let $B({\omega_T})\in GL^{+}(2)$, where $B$ is parameterized using the decomposition in \eqref{eq:iwasawa}. Let the input measure $\mu_I(\omega) = B({\omega_T})_\sharp\eta({\omega_\eta})$ and the output measure $\mu_O(\omega) = B({\omega_T})^{-1}_\sharp\eta({\omega_\eta})$. This defines the mapping
\[
\Psi:\mu_I(\omega) \mapsto B({\omega_T})^{-2}_\sharp \mu_I(\omega).
\]
Appendix~\ref{app:shearparams} specifies $M({\omega_T})$, $B({\omega_T})$, and $\Pi_T$. Appendix~\ref{app:opchoice} explains the choice of operators and discusses identifiability.

\subsection{Model architecture}
We approximate $\Psi$ with a six-layer non-causal transformer. Sections~\ref{sec:family} and~\ref{sec:norm} define its attention kernels and post-norm blocks. Appendix~\ref{app:model} gives the full architecture and compute requirements.

\subsubsection{Attention kernel suite}
\label{sec:family}
Recall the attention formulation of \citep{vaswani2017attention} from Section~\ref{sec:measureattention}. 
We replace the exponential in softmax by a scalar map $g$ applied elementwise and compute
\begin{equation}
  \begin{aligned}
    A_g(S)_{ij} &= \frac{g(S_{ij})}{\sum_{\ell=1}^{L}g(S_{i\ell})},\\
    \mathrm{attention}_g(u,v) &= A_g(S)z.
  \end{aligned}
  \label{eq:kernel-attention}
\end{equation}
Softmax is the special case $g(s)=e^s$. Our attention kernel suite only admits strictly positive maps $g$, ensuring that a finite normalizer cannot vanish. In the positive domain, our kernels span exponential to logarithmic growth. Table~\ref{tab:family} lists the suite, containing softmax as well as three ELU-style maps inspired by the $1+\mathrm{elu}$ map of \citet{katharopoulos2020transformers}. Appendix~\ref{app:kernelviz} visualizes the suite.

\begin{table}[h]
  \centering
  \caption{The attention kernel suite, ordered by the growth rate of the positive branch,
  given in the last column as $s\to\infty$. Every member is strictly positive and
  $C^1$ at the origin with $g(0)=1$.}
  \label{tab:family}
  \small
  \begin{tabular}{llll}
    \toprule
    kernel & $g(s)$, $s>0$ & $g(s)$, $s\leq0$ & growth \\
    \midrule
    softmax      & $e^{s}$              & $e^{s}$      & exponential \\
    squared-ELU  & $(s+1)^2$            & $e^{2s}$     & quadratic \\
    linear-ELU   & $s+1$                & $e^{s}$      & linear \\
    log-ELU      & $\log(1+s)+1$        & $(1-s)^{-1}$ & logarithmic \\
    \bottomrule
  \end{tabular}
\end{table}

\subsubsection{Post-norm blocks}
\label{sec:norm}
The main experiments use post-norm blocks, which apply layer normalization after each residual addition. With residual scaling inspired by \citet{wang2022deepnet}, an attention sublayer computes
\[
  x\mapsto\mathrm{LN}\bigl(\alpha_{\mathrm{RS}}x+\mathrm{Attn}(x)\bigr),
  \qquad \alpha_{\mathrm{RS}}=4.899.
\]
The feed-forward sublayer uses the same residual structure.

\subsection{Training}
Section~\ref{sec:logtransform} describes the symlog preprocessing ablated in Section~\ref{sec:ablation}. Section~\ref{sec:objective} defines the squared energy distance and its V-statistic estimator used for $D$.

\subsubsection{Symlog data transformation}
\label{sec:logtransform}
Compressing heavy-tailed data by a logarithm is standard practice
\citep{resnick2007heavy,nair2022fundamentals}. We extend the logarithm to the negative domain while preserving sign, implemented as the \emph{symlog} transform of \citet{webber2013bisymmetric,hafner2023dreamerv3},
\begin{equation}
  \sigma(x)=\operatorname{sign}(x)\log(1+|x|),
  \qquad
  \sigma^{-1}(y)=\operatorname{sign}(y)\bigl(e^{|y|}-1\bigr).
  \label{eq:signed-log-main}
\end{equation}
We apply $\sigma$ to the input samples and $\sigma^{-1}$ to the transformer output. Writing $f_\theta$ for the transformer, the complete predictor is $\widetilde{\Psi}_\theta=\sigma^{-1}\circ f_\theta\circ\sigma$, with $\sigma=\mathrm{id}$ when preprocessing is not used.

\subsubsection{Energy distance}
\label{sec:objective}
We use the \emph{squared} energy distance with $p = 1$ \citep{szekely2004testing,szekely2013energy} as the divergence $D$ in \eqref{eq:population-risk}. For measures $\mu,\nu$ on
$\mathbb{R}^{d}$ with finite $p$th absolute moments for both measures and $p\in(0,2)$, the squared
energy distance is
\begin{equation}
  \mathcal{E}_p^{2}(\mu,\nu)
    = 2\,\mathbb{E}\|X-Y\|^{p}
      - \mathbb{E}\|X-X'\|^{p}
      - \mathbb{E}\|Y-Y'\|^{p}
    \;\ge\; 0,
  \label{eq:energy-pop}
\end{equation}
where $X,X'$ are independent draws from $\mu$ and $Y,Y'$ independent draws from
$\nu$. For sample
sets $U,V$ of size $N$ we use the V-statistic estimator of
\eqref{eq:energy-pop},
\begin{equation}
  \mathcal{E}_p^{2}(U,V)=\frac{1}{N^{2}}\Bigl(
      2\sum_{i,j}\|u_i-v_j\|^{p}
      -\sum_{i,j}\|u_i-u_j\|^{p}
      -\sum_{i,j}\|v_i-v_j\|^{p}\Bigr),
  \label{eq:energy}
\end{equation}
and set $D=\mathcal{E}_1^{2}$. For conciseness, we refer to this squared distance as the \emph{energy distance} below.

\paragraph{The i.i.d. sampling reference.}
Because \eqref{eq:energy} is a finite-sample V-statistic, two independent size-$N$ ensembles from the same nondegenerate target have a positive expected distance. We use their mean or median distance as an \emph{i.i.d. sampling reference}, denoted $\mathrm{floor}_N$ in the formulas.
\begin{equation}
  \mathrm{floor}_{N}
    = \mathrm{summary}\;
      \mathcal{E}_1^{2}\bigl(Y_{\omega},\,Y'_{\omega}\bigr),
  \qquad
  Y_{\omega},Y'_{\omega}\overset{\mathrm{iid}}{\sim}{\mu_{O}(\omega)}^{\otimes N},
  \label{eq:noise-floor}
\end{equation}
where $\mathrm{summary}$ is the mean or median over trials, each drawing $\omega\sim\Pi$ and two independent target ensembles. The reference is computed separately for each operator and marginal family.

\section{Experiments}
\label{sec:experiments}

Section~\ref{sec:tailsensitivity} examines how the sample budget $N$ affects the sensitivity of the energy distance to tail differences. This experiment informs our choice of $N$ when performing the ablations in Section~\ref{sec:ablation}.

\subsection{Tail sensitivity of the energy distance}
\label{sec:tailsensitivity}
We first examine how the empirical energy distance responds to differences in polynomial tail decay. The divergence $D$ should distinguish when measures decay at different rates in their low density tail regions, even when their high density regions are equivalent. The population energy distance is zero exactly when the two measures agree.

\begin{figure}[!htbp]
  \centering
  \includegraphics[width=\linewidth]{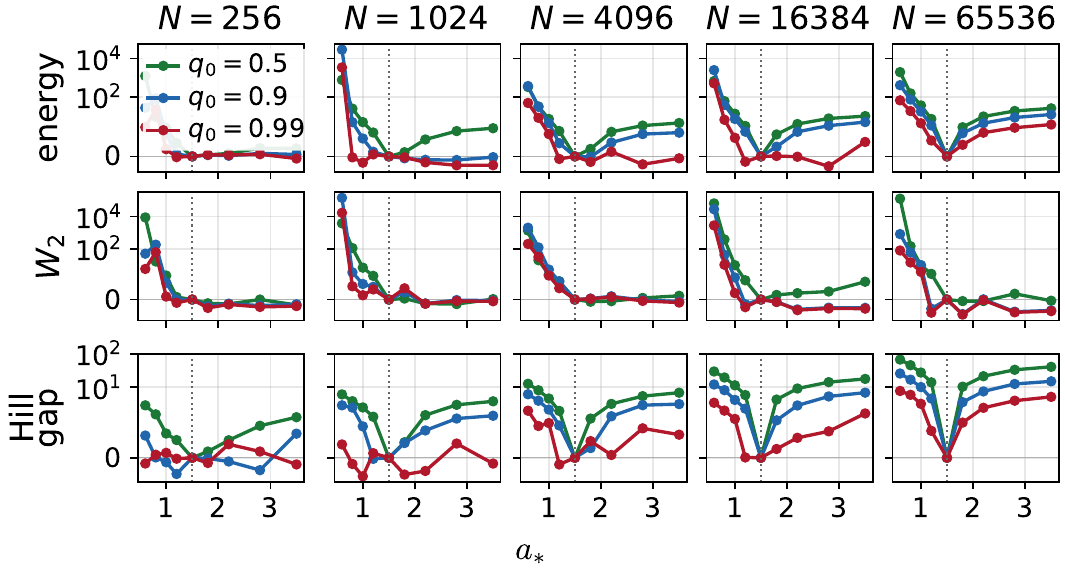}
  \caption{Sensitivity of the energy distance, 2-Wasserstein distance, and Hill gap to tail-index differences at varying sample budgets $N$. Each plotted point reports a value computed from two sample sets drawn from a measure pair as described in Section~\ref{sec:tailsensitivity}. The rows report the energy distance, 2-Wasserstein distance, and absolute difference between the two sample Hill estimates (Hill gap), respectively. Each plot displays the value against $a_*$. Every value is averaged over multiple draws, then shifted and scaled as described in Appendix~\ref{app:noiseunits}, so that $a_* = a_{\text{ref}}$ corresponds to a value of 0. The rows use separate vertical scales. The colors denote the quantile value $q_0$ of the seam.}
  \label{fig:metricres}
\end{figure}

To compare empirical divergences, we evaluate them on sample sets drawn from a testbed of 1D measure pairs. Each measure in the testbed follows a piecewise density function that joins two Pareto densities at a specific quantile, which we call the seam. A Pareto density decreases monotonically on its support, causing the Pareto components below and above the seam to control the high density and low density regions of the piecewise density function, respectively. The measures are paired such that the right-side Pareto's $a$ is fixed at a reference value $1.5$ for one measure and varied for the other; we denote the reference value $a_{\text{ref}}$ and the varied value $a_*$. For both measures, we fix the left-side Pareto at $\lambda = 1$ and $a = a_\text{ref}$. We set the right-side Pareto's $\lambda$ and translate it such that the piecewise density is continuous at the seam. Thus, the two measures coincide below the seam, with any differences confined to the tail above it. Figure~\ref{fig:metricres} displays the interaction between the sample budget $N$ and the ability of the energy distance, 2-Wasserstein distance, and Hill estimator \citep{hill1975simple} to identify differences in the low density tail regions of measure pairs over three locations of the seam. Appendix~\ref{app:splice} formalizes and visualizes the piecewise densities described above.

The energy distance becomes more sensitive to the tail index as the sample budget increases. At the smallest budget, its curve is nearly flat for $a_*>1.5$; at the largest budget, its minimum occurs at the known reference index $a_*=1.5$ for all seam locations. The Hill gap shows a similar minimum, whereas the empirical 2-Wasserstein curve remains comparatively flat for $a_*>1.5$. This comparison supports using the energy distance for this testbed and motivates a large training ensemble. Since training at $N=65536$ is computationally infeasible, we use $N=16384$ in Section~\ref{sec:ablation}. Appendix~\ref{app:tailsignal} discusses the moment conditions relevant to the empirical comparison.

\subsection{Ablation grid}
\label{sec:ablation}

We train post-norm transformers with each attention kernel, with and without symlog preprocessing. Figures~\ref{fig:expswap} and~\ref{fig:expinvlin} report the sheared swap and matrix inverse results, respectively. The sheared swap includes mirrored Pareto and Gaussian marginals; the matrix inverse includes only mirrored Pareto marginals because the Gaussian case is unidentifiable (Proposition~\ref{prop:gaussfiber}). Each configuration uses one training seed and is evaluated on $128$ trials. The supplementary pre-norm runs show no ensemble collapse across the tested kernels and transformations (Appendix~\ref{app:pre-norm}). Three observations follow.

\begin{figure}[!htbp]
  \centering
  \includegraphics[width=0.92\linewidth]{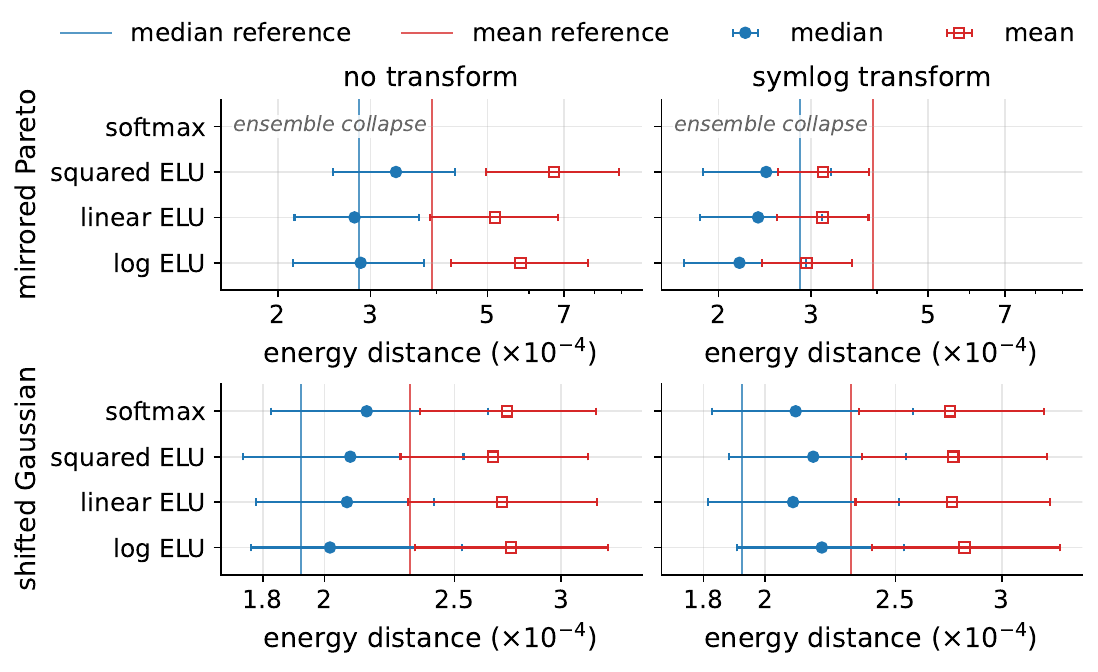}
  \caption{Approximation results for the sheared swap operator using post-norm at $N=16384$. The grid organizes models trained with the symlog transform (right), without the symlog transform (left), on mirrored Pareto marginals (top), and on shifted Gaussian marginals (bottom). Each plot reports a median and mean energy distance for every attention kernel. The vertical lines mark the matching i.i.d.\ sampling reference under the same summary statistic, in the color of that statistic. Bars are $95\%$ bootstrap intervals over evaluation trials (Appendix~\ref{app:bootstrap}). Scores below the sampling reference are possible for model-generated ensembles, as explained in Appendix~\ref{app:ratios}. The label \emph{ensemble collapse} marks runs with degenerate output ensembles; Appendix~\ref{app:collapse} shows an example.}
  \label{fig:expswap}
\end{figure}

\begin{figure}[!htbp]
  \centering
  \includegraphics[width=0.92\linewidth]{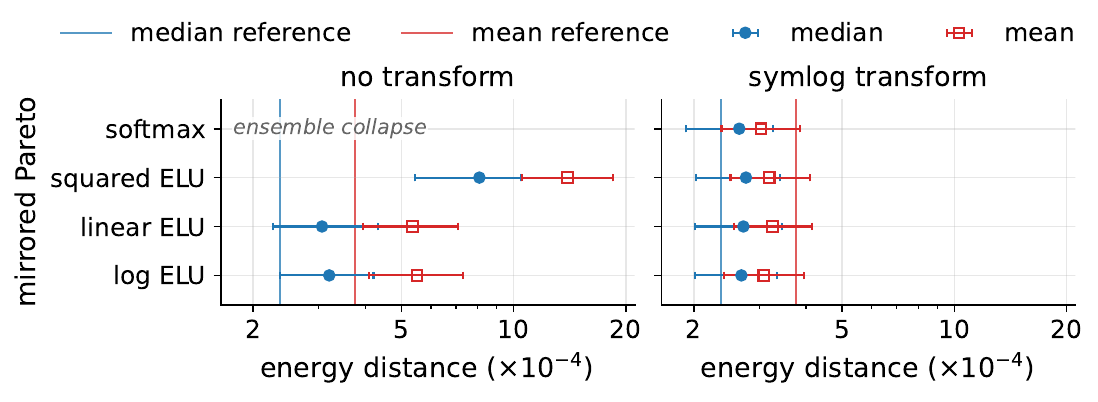}
  \caption{Matrix inverse results on mirrored Pareto marginals, with the same setup as Figure~\ref{fig:expswap}.}
  \label{fig:expinvlin}
\end{figure}

\paragraph{Kernel choice on heavy-tailed inputs.}
Without symlog preprocessing, the post-norm softmax models exhibit ensemble collapse on both heavy-tailed benchmarks. All three slower-growing kernels avoid collapse and achieve lower energy distances. Appendix~\ref{app:collapse} shows the output degeneration and training curves. Relative to the median i.i.d. sampling reference, the median distances of the three slower-growing kernels range from $0.980$ to $1.174$ on the sheared swap task and from $1.295$ to $3.423$ on the matrix inverse task (Tables~\ref{tab:energysummary} and~\ref{tab:energysummaryinvlin}).

\paragraph{Effect of symlog preprocessing.}
With symlog, the softmax model avoids collapse on the matrix inverse task but still collapses on the sheared swap task. For the three slower-growing kernels, symlog reduces the median distance ratios to $0.767$--$0.863$ on the sheared swap task and $1.135$--$1.167$ on the matrix inverse task. The softmax ratio on the matrix inverse task is $1.120$. Symlog compresses Pareto tails to exponential tails (Appendix~\ref{app:logspace}).

\paragraph{Gaussian control.}
On the Gaussian sheared swap benchmark, all four kernels avoid collapse and achieve similar energy distances. The separation between softmax and the slower-growing kernels is thus specific to the heavy-tailed configurations in these experiments. We hypothesize that exponential weighting amplifies the influence of extreme samples and contributes to the observed collapse.

\FloatBarrier
\section{Conclusion}
\label{sec:conclusion}

We design two measure-to-measure operator learning benchmarks with closed-form targets. In the post-norm experiments, softmax exhibits ensemble collapse on both heavy-tailed tasks without preprocessing, while all three slower-growing kernels avoid collapse. Symlog enables softmax to avoid collapse on one of the two tasks. These results support replacing the exponential kernel as an effective design choice for post-norm transformers trained on heavy-tailed ensembles.

Our tail-sensitivity testbed also shows how the sample budget affects empirical distribution comparisons: at larger budgets, the energy distance distinguishes tail-index differences that are less apparent in the empirical 2-Wasserstein curves.

\paragraph{Limitations.}
\label{sec:limitations}
Our experiments use two synthetic operators on two-dimensional measures and one training seed per configuration. The bootstrap intervals describe variation across evaluation instances at fixed trained weights. Training also uses different attention implementations: FlashAttention with bfloat16 autocast for softmax and custom float32 attention for the other kernels (Appendix~\ref{app:compute}). Multiple seeds and comparisons at a common precision would help assess the reproducibility of the observed kernel differences.

\paragraph{Future work.}
\label{sec:future}
Tracking attention weights and gradients around the onset of collapse would help test the explanation proposed in Section~\ref{sec:ablation}. A broader kernel family could also clarify how the shape of the weighting function affects optimization. Appendix~\ref{app:tailstat} complements the energy distance with a Hill-based diagnostic. Evaluating errors against the known tail indices across a range of thresholds would provide a more detailed account of tail approximation and could guide the design of training objectives. Higher-dimensional tasks and nonlinear sample transformations would extend the evaluation to more complex measure-to-measure maps, while applications in data assimilation and inverse problems would test its practical relevance.

\clearpage
\bibliographystyle{plainnat}
\bibliography{references}

\clearpage
\appendix
\FloatBarrier
\raggedbottom
\makeatletter
\let\apx@section\section
\let\apx@subsection\subsection
\renewcommand{\section}{\FloatBarrier\apx@section}
\renewcommand{\subsection}{\FloatBarrier\apx@subsection}
\makeatother

\section{Tail decay and attention integrability}
\label{app:integrability}

The conditions in \eqref{eq:attention-integrability} depend on both the
kernel and the key/value measure. We give a sufficient tail bound for
softmax and verify divergence of its denominator on the mirrored Pareto
family used in this paper.

\begin{definition}
  \label{def:tails}
  A measure $\nu\in\mathcal{P}(\mathbb{R}^{d_v})$ is \emph{light-tailed}
  if $\int e^{t\lVert y\rVert_2}\nu(\mathrm{d}y)<\infty$ for some $t>0$,
  and \emph{heavy-tailed} otherwise.
\end{definition}

Write $S_\nu(r)=\nu(\{y:\lVert y\rVert_2>r\})$. A sufficient condition
for softmax attention to be defined at every query and every finite choice
of weights is the following faster-than-exponential radial tail bound:
\begin{equation}
  \label{eq:tailclass}
  S_\nu(r)\le C\exp\!\bigl(-r^{1+\delta}\bigr)
  \qquad(r\ge r_0)
\end{equation}
for some $\delta>0$, $C>0$ and $r_0\ge0$. This condition is stronger than
light-tailedness. Indeed, for every $t>0$,
\[
  \int e^{t\lVert y\rVert_2}\nu(\mathrm{d}y)
  =1+t\int_0^\infty e^{tr}S_\nu(r)\,\mathrm{d}r<\infty.
\]
For a fixed query $s$, put $b=K^\top Qs/\sqrt{d_K}$. Then the softmax
denominator is positive and bounded by
$\int e^{\lVert b\rVert_2\lVert y\rVert_2}\nu(\mathrm{d}y)$, while its
absolute numerator is bounded by
\[
  \lVert V\rVert_{\mathrm{op}}
  \int \lVert y\rVert_2 e^{\lVert b\rVert_2\lVert y\rVert_2}
       \nu(\mathrm{d}y)
  \le \lVert V\rVert_{\mathrm{op}}
  \int e^{(\lVert b\rVert_2+1)\lVert y\rVert_2}\nu(\mathrm{d}y)<\infty.
\]

\begin{proposition}
  \label{prop:polytail}
  If $S_\nu(r)\ge\gamma r^{-a}$ for all sufficiently large $r$, where
  $a,\gamma>0$, then $\nu$ is heavy-tailed and does not satisfy
  \eqref{eq:tailclass}.
\end{proposition}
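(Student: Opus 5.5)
The plan is to prove heavy-tailedness directly from the layer-cake identity already used above for the light-tailed case. For any $t>0$ we have
\[
  \int e^{t\lVert y\rVert_2}\nu(\mathrm{d}y)
  =1+t\int_0^\infty e^{tr}S_\nu(r)\,\mathrm{d}r .
\]
This identity holds for nonnegative integrands in $[0,\infty]$ by Tonelli, so no a priori finiteness is needed.

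By hypothesis there is $r_1$ with $S_\nu(r)\ge\gamma r^{-a}$ for all $r\ge r_1$. We bound the right-hand side from below by
\[
  t\gamma\int_{r_1}^\infty e^{tr}r^{-a}\,\mathrm{d}r .
\]
The integrand $e^{tr}r^{-a}$ tends to $+\infty$ as $r\to\infty$, because exponential growth dominates any power. Concretely, $e^{tr}\ge (tr)^{k}/k!$ with $k=\lceil a\rceil+1$ gives a lower bound $c\,r$ for large $r$. Hence the integral diverges for every $t>0$. By Definition~\ref{def:tails}, $\nu$ is therefore heavy-tailed.

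For the second claim, we argue by contrapositive. The computation preceding the proposition shows that \eqref{eq:tailclass} implies finiteness of $\int e^{t\lVert y\rVert_2}\nu(\mathrm{d}y)$ for every $t>0$, so $\nu$ would be light-tailed. This contradicts the first part.

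One could alternatively argue directly: \eqref{eq:tailclass} combined with the hypothesis gives
\[
  \gamma r^{-a}\le C e^{-r^{1+\delta}}
\]
for all large $r$, which is impossible since $r^{a}e^{-r^{1+\delta}}\to0$.

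The only mildly delicate step is the layer-cake identity, which has been verified only informally earlier. If needed, I would justify it by writing
\[
  e^{t\lVert y\rVert_2}=1+\int_0^{\lVert y\rVert_2}te^{tr}\,\mathrm{d}r
\]
and exchanging the order of integration by Tonelli. The rest is elementary.
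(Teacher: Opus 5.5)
Your proposal is correct and follows the paper's proof: the paper likewise bounds the layer-cake tail integral below by a positive multiple of $\int_{r_0}^\infty e^{tr}r^{-a}\,\mathrm{d}r=\infty$ to get heavy-tailedness, and rules out \eqref{eq:tailclass} with exactly your alternative direct comparison, namely $r^{-a}\exp(r^{1+\delta})\to\infty$. Your primary route for the second claim also works, since the paper shows just before the proposition that \eqref{eq:tailclass} implies light-tailedness, which contradicts the first part.
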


\begin{proof}
  For every $t>0$, the tail integral above is bounded below by a positive
  multiple of $\int_{r_0}^\infty e^{tr}r^{-a}\,\mathrm{d}r=\infty$.
  Moreover, $r^{-a}\exp(r^{1+\delta})\to\infty$ for every $\delta>0$,
  which rules out \eqref{eq:tailclass}.
\end{proof}

For the mirrored Pareto variable $Z$ in \eqref{eq:mirrored-sample}, both tails
decay polynomially, so $\mathbb{E}e^{cZ}=\infty$ for every $c\ne0$.
Let $Z_1,Z_2$ be the independent marginals of $\eta$ and let
$\nu=T_\sharp\eta$ for either invertible mixing matrix used in the
benchmarks. If $b\ne0$, then $T^\top b\ne0$, and independence gives
\[
  \mathcal{D}_{\exp}(s;\nu)
  =\mathbb{E}e^{b^\top TZ}
  =\prod_{j=1}^{2}\mathbb{E}e^{(T^\top b)_jZ_j}
  =\infty.
\]
Thus population softmax attention on these measures is undefined for any
nonzero linear score. A zero score instead gives uniform weighting and
requires only integrability of the values.

For a strictly positive kernel satisfying $g(t)\le C_g(1+|t|^q)$
with $q>0$, a finite $(q+1)$st absolute moment of the key/value measure
suffices for both integrals at any fixed query and finite weights. This
follows by bounding the score by a constant times $\lVert y\rVert_2$
and the value by $\lVert V\rVert_{\mathrm{op}}\lVert y\rVert_2$.

\begin{remark}
  \label{rem:integrability}
  The measure $\nu$ in \eqref{eq:attention-integrability} is the input to
  a given attention sublayer; after the first block, it is a learned
  representation rather than the original measure $\mu_I(\omega)$.
\end{remark}

\section{Expectation over the energy distance on sample sets of fixed size}
\label{app:blocking}
For independent i.i.d.\ ensembles from fixed laws with finite absolute
moments at exponent $p$, the empirical energy distance \eqref{eq:energy}
converges to the population formula \eqref{eq:energy-pop} by the law of
large numbers for V-statistics. The training objective
\eqref{eq:population-risk} instead averages a distance computed at fixed
ensemble size. The identity below isolates the difference between
averaging distances over blocks and computing a distance on their
pooled ensembles. The identity holds for fixed ensembles, including model outputs with dependent points.

Fix an exponent $p$ in the range of \eqref{eq:energy-pop}.
All sample points below lie in the same Euclidean space, and the block
count and block size are positive integers.
Fix $N=N_1N_2$ sample pairs, arranged as two ensembles
$Y_1=[y_{1,1},\dots,y_{1,N}]^{\top}$ and $Y_2=[y_{2,1},\dots,y_{2,N}]^{\top}$. Partition the index set $\{1,\dots,N\}$ into $N_1$ blocks
$B_1,\dots,B_{N_1}$ of $N_2$ indices each. For a nonempty set $\mathcal{S}$ of ordered
index pairs write
\begin{equation}
  \mathcal{D}(\mathcal{S})
  \;=\;
  \frac{1}{\lvert\mathcal{S}\rvert}\!\!\sum_{(i,j)\in\mathcal{S}}\!\!
  \Bigl(
     2\lVert y_{1,i}-y_{2,j}\rVert^{p}
     - \lVert y_{1,i}-y_{1,j}\rVert^{p}
     - \lVert y_{2,i}-y_{2,j}\rVert^{p}
  \Bigr),
  \label{eq:contrast}
\end{equation}
the energy contrast averaged over the pairs in $\mathcal{S}$, and let
\begin{equation*}
  \mathcal{A}=\{1,\dots,N\}^{2},
  \qquad
  \mathcal{I}=\bigcup_{b=1}^{N_1} B_b\times B_b,
  \qquad
  \mathcal{O}=\mathcal{A}\setminus\mathcal{I},
\end{equation*}
be all $\lvert\mathcal{A}\rvert=N^{2}$ ordered pairs, the
$\lvert\mathcal{I}\rvert=N_1N_2^{2}=NN_2$ pairs lying inside a common block, and
the $\lvert\mathcal{O}\rvert=N(N-N_2)$ pairs straddling two blocks, respectively.

Applying the empirical energy distance \eqref{eq:energy} to the two ensembles at full size $N$ gives
\begin{equation}
  \mathcal{E}_p^{2}(Y_1,Y_2)
  \;=\;
  \frac{1}{N^{2}}\sum_{i,j=1}^{N}
  \Bigl(
     2\lVert y_{1,i}-y_{2,j}\rVert^{p}
     - \lVert y_{1,i}-y_{1,j}\rVert^{p}
     - \lVert y_{2,i}-y_{2,j}\rVert^{p}
  \Bigr)
  \;=\;
  \mathcal{D}(\mathcal{A}).
  \label{eq:whole}
\end{equation}

For an ensemble $Y$ and an index block $B$, let $Y|_B$ denote the
ensemble restricted to that block. Applying the empirical energy distance
within each block and averaging gives
\begin{equation}
  \overline{\mathcal{E}}_p^{2}
  \;=\;
  \frac{1}{N_1}\sum_{b=1}^{N_1}
  \mathcal{E}_p^{2}\bigl(Y_1|_{B_b},\,Y_2|_{B_b}\bigr)
  \;=\;
  \frac{1}{N_1}\sum_{b=1}^{N_1}\frac{1}{N_2^{2}}\!\!\sum_{i,j\in B_b}\!\!(\cdots)
  \;=\;
  \mathcal{D}(\mathcal{I}).
  \label{eq:blocked}
\end{equation}
Here $(\cdots)$ denotes the energy-contrast summand in \eqref{eq:contrast}.

\begin{proposition}
\label{prop:blocking}
For any fixed $Y_1,Y_2$ and any partition into $N_1$ blocks of size $N_2$
with $\mathcal{O}\ne\varnothing$,
\begin{equation}
  \mathcal{E}_p^{2}(Y_1,Y_2)
  \;=\;
  \frac{1}{N_1}\,\overline{\mathcal{E}}_p^{2}
  \;+\;
  \Bigl(1-\frac{1}{N_1}\Bigr)\,\mathcal{D}(\mathcal{O}),
  \label{eq:blockid}
\end{equation}
equivalently
\begin{equation}
  \mathcal{E}_p^{2}(Y_1,Y_2)-\overline{\mathcal{E}}_p^{2}
  \;=\;
  \Bigl(1-\frac{1}{N_1}\Bigr)
  \bigl[\mathcal{D}(\mathcal{O})-\mathcal{D}(\mathcal{I})\bigr].
  \label{eq:blockdiff}
\end{equation}
If $\mathcal{O}=\varnothing$, the full and blockwise distances coincide;
no off-block contrast is defined or needed.
\end{proposition}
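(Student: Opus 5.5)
The identity follows from splitting the sum of energy-contrast summands over the disjoint union $\mathcal{A}=\mathcal{I}\sqcup\mathcal{O}$. Write $\Sigma(\mathcal{S})=\lvert\mathcal{S}\rvert\,\mathcal{D}(\mathcal{S})$ for the unnormalized sum of the summand in \eqref{eq:contrast} over a nonempty set $\mathcal{S}$. Since $\mathcal{I}$ and $\mathcal{O}$ partition $\mathcal{A}$, additivity of finite sums gives $\Sigma(\mathcal{A})=\Sigma(\mathcal{I})+\Sigma(\mathcal{O})$. Dividing by $\lvert\mathcal{A}\rvert=N^2$ and using \eqref{eq:whole} yields
\[
  \mathcal{E}_p^{2}(Y_1,Y_2)=\frac{\lvert\mathcal{I}\rvert}{N^2}\,\mathcal{D}(\mathcal{I})+\frac{\lvert\mathcal{O}\rvert}{N^2}\,\mathcal{D}(\mathcal{O}).
\]

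Next I insert the cardinalities. We have $\lvert\mathcal{I}\rvert=N_1N_2^2=NN_2$, so $\lvert\mathcal{I}\rvert/N^2=N_2/N=1/N_1$, and $\lvert\mathcal{O}\rvert/N^2=1-1/N_1$. These follow because the blocks are disjoint, so the sets $B_b\times B_b$ are disjoint and each has $N_2^2$ elements. By \eqref{eq:blocked}, $\mathcal{D}(\mathcal{I})=\overline{\mathcal{E}}_p^{2}$. This step deserves a brief check: $\mathcal{D}(\mathcal{I})$ is the average over all of $\mathcal{I}$, while $\overline{\mathcal{E}}_p^{2}$ is an average of per-block averages. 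The two agree because every block has the same size $N_2$, so the mean of the block means equals the pooled mean. Substituting gives \eqref{eq:blockid}.

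For \eqref{eq:blockdiff}, I subtract $\overline{\mathcal{E}}_p^{2}=\mathcal{D}(\mathcal{I})$ from both sides of \eqref{eq:blockid}. The coefficient on $\mathcal{D}(\mathcal{I})$ becomes $1/N_1-1=-(1-1/N_1)$, and factoring gives the stated form.

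For the degenerate case, note that $\mathcal{O}=\varnothing$ forces $N_2=N$, i.e.\ $N_1=1$. Then $\mathcal{I}=\mathcal{A}$, and the single block is the whole index set, so $\mathcal{E}_p^{2}(Y_1,Y_2)=\mathcal{D}(\mathcal{A})=\mathcal{D}(\mathcal{I})=\overline{\mathcal{E}}_p^{2}$ directly. When $\mathcal{O}\neq\varnothing$, $\mathcal{D}(\mathcal{O})$ is well defined.

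There is no genuine obstacle here. The only points needing care are the equal-block-size requirement that identifies the mean of block means with $\mathcal{D}(\mathcal{I})$, and the nonemptiness of $\mathcal{O}$ for the normalization. No moment assumptions are used, since all sums are finite and the ensembles are fixed, possibly with dependent points.
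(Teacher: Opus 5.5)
Your proposal is correct and matches the paper's proof: split the full sum over the disjoint union $\mathcal{A}=\mathcal{I}\cup\mathcal{O}$, substitute $\lvert\mathcal{I}\rvert/N^2=1/N_1$ and $\lvert\mathcal{O}\rvert/N^2=1-1/N_1$, identify $\mathcal{D}(\mathcal{I})=\overline{\mathcal{E}}_p^{2}$ via \eqref{eq:blocked}, and subtract. Your extra remarks are correct and slightly more explicit than the paper: $\mathcal{O}=\varnothing$ forces $N_1=1$, and equal block sizes are what make the mean of block means equal the pooled mean.
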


\begin{proof}
If $\mathcal{O}$ is empty, then $\mathcal{A}=\mathcal{I}$, and
\eqref{eq:whole} and \eqref{eq:blocked} give equality directly.
Otherwise, both parts of the disjoint union
$\mathcal{A}=\mathcal{I}\cup\mathcal{O}$ are nonempty. Since
$\mathcal{D}$ averages the same energy-contrast summand over each index
set, splitting the full sum yields
$N^{2}\mathcal{D}(\mathcal{A})
=\lvert\mathcal{I}\rvert\mathcal{D}(\mathcal{I})
+\lvert\mathcal{O}\rvert\mathcal{D}(\mathcal{O})$.
Dividing by $N^{2}$ and substituting
$\lvert\mathcal{I}\rvert/N^{2}=1/N_1$ and
$\lvert\mathcal{O}\rvert/N^{2}=1-1/N_1$ gives \eqref{eq:blockid} after
identifying $\mathcal{D}(\mathcal{I})=\overline{\mathcal{E}}_p^{2}$ by
\eqref{eq:blocked}; subtracting $\overline{\mathcal{E}}_p^{2}$ from both sides
gives \eqref{eq:blockdiff}.
\end{proof}

Thus averaging blockwise distances can differ from computing the distance on the pooled ensembles, as quantified by \eqref{eq:blockdiff}.

\section{The marginal families}
\label{app:distributions}

This appendix section complements Section \ref{sec:heavylightmarginals}.

\subsection{The mirrored Pareto family and the light-tailed control}
\label{app:mirrored}

We now restate the Pareto and mirrored Pareto definitions and visualize their densities. A Pareto distribution is parameterized by a scale $\lambda>0$ and tail index $a>0$. It is supported on
$[\lambda,\infty)$ with survival function $S_P(z)=(z/\lambda)^{-a}$ and density
$f_P(z)=a\lambda^{a}z^{-a-1}$. The Pareto density is discontinuous at $\lambda$.

Reflecting the Pareto about the origin and translating the reflected branch by $2\lambda$ makes the two branches meet continuously at $z=\lambda$, the \emph{seam}.

In this construction and the moment calculation below, $X$ denotes the
underlying Pareto variable and $Z$ its mirrored version.
Let $X\sim\mathrm{Pareto}(\lambda,a)$ and let
$\mathrm{Rad}$ be an independent Rademacher sign, equally likely to be
positive or negative. Then
\begin{equation}
  Z \;=\; \begin{cases} X, & \mathrm{Rad}=+1,\\ 2\lambda - X, & \mathrm{Rad}=-1,\end{cases}
  \label{eq:mirrored-sample}
\end{equation}
which is supported on all of $\mathbb{R}$. Writing the \emph{folded}
coordinate $m(z)=\lambda+|z-\lambda|$, the density and cumulative distribution function are
\begin{equation}
  f(z) \;=\; \tfrac{1}{2} f_P\big(m(z)\big),
  \qquad
  F(z) \;=\;
  \begin{cases}
    \tfrac{1}{2} S_P\big(m(z)\big), & z<\lambda,\\[2pt]
    1 - \tfrac{1}{2} S_P\big(m(z)\big), & z\ge\lambda.
  \end{cases}
  \label{eq:mirrored-density}
\end{equation}
The translation is what makes $f$ continuous: at $z=\lambda$ the right branch
contributes $\tfrac12 f_P(\lambda)$ and the left branch
$\tfrac12 f_P(2\lambda-z)|_{z=\lambda}$, which are equal.

The construction also preserves quantile transport, the increasing map
that matches cumulative probabilities between source and destination
laws. Let $(\lambda_{\mathrm{src}},a_{\mathrm{src}})$ and
$(\lambda_{\mathrm{dst}},a_{\mathrm{dst}})$ be their scale and index pairs.
The corresponding Pareto transport and source folded coordinate are
\[
  T_P(x)=\lambda_{\mathrm{dst}}
    (x/\lambda_{\mathrm{src}})^{a_{\mathrm{src}}/a_{\mathrm{dst}}},
  \qquad
  m_{\mathrm{src}}(z)=\lambda_{\mathrm{src}}+
    |z-\lambda_{\mathrm{src}}|.
\]
The transport between the mirrored laws is therefore
\begin{equation}
  T(z) \;=\;
  \begin{cases}
    T_P\big(m_{\mathrm{src}}(z)\big), & z \ge \lambda_{\mathrm{src}},\\
    2\lambda_{\mathrm{dst}} - T_P\big(m_{\mathrm{src}}(z)\big), & z < \lambda_{\mathrm{src}}.
  \end{cases}
  \label{eq:mirrored-transport}
\end{equation}
Quantile transport allows measures to be sampled in closed form. Figure~\ref{fig:families} compares the densities of the Pareto distribution, mirrored Pareto distribution, and Gaussian distribution.

\begin{figure}[H]
  \centering
  \includegraphics[width=\linewidth]{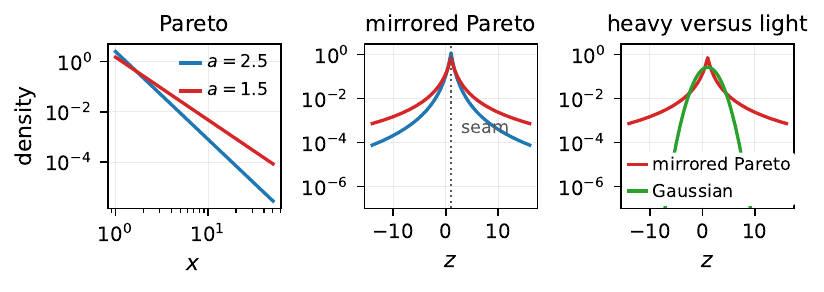}
  \caption{The Pareto, mirrored Pareto, and Gaussian densities centered at 1. Left: the Pareto
  density \eqref{eq:Pareto} on
  log--log axes for two tail indices. Center: the mirrored Pareto density \eqref{eq:mirrored} at the same two tail
  indices, with the seam $z=\lambda$ dotted, only the vertical axis is plotted on a log scale. Right: the mirrored Pareto density
  against a shifted Gaussian density, only the vertical axis is plotted on a log scale.}
  \label{fig:families}
\end{figure}

\subsection{The admissible range of tail indices}
\label{app:momentcondition}
Proposition~\ref{prop:paretomoments} establishes the moment conditions used in Section~\ref{sec:heavylightmarginals}: for the mirrored Pareto distribution, the interval $a \in [1 + \epsilon, 2]$ gives a finite first
moment and an infinite variance.

\begin{proposition}
  \label{prop:paretomoments}
  Let $Z$ be mirrored Pareto with scale $\lambda>0$ and index $a>0$, as in
  \eqref{eq:mirrored-sample}, and let $p$ be positive.
  Then $\mathbb{E}\lvert Z\rvert^{p}<\infty$ if and
  only if $p<a$. In particular $Z$ has a finite mean exactly when $a>1$ and a finite variance
  exactly when $a>2$.
\end{proposition}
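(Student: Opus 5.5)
The plan is to reduce the moment of $Z$ to moments of the underlying Pareto variable $X$ through the sampling representation \eqref{eq:mirrored-sample}, and then compute the Pareto moment explicitly. Conditioning on the independent Rademacher sign gives
\[
  \mathbb{E}\lvert Z\rvert^{p}
  =\tfrac12\,\mathbb{E}\lvert X\rvert^{p}
  +\tfrac12\,\mathbb{E}\lvert 2\lambda-X\rvert^{p}.
\]
Both terms are nonnegative, so $\mathbb{E}\lvert Z\rvert^{p}<\infty$ if and only if both expectations on the right are finite.

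\textbf{Step 1: the Pareto moment.} For $X\sim\mathrm{Pareto}(\lambda,a)$, the density \eqref{eq:Pareto} gives
\[
  \mathbb{E}X^{p}=\int_\lambda^\infty a\lambda^{a}x^{p-a-1}\,\mathrm{d}x .
\]
This integral is finite, and equals $a\lambda^{p}/(a-p)$, exactly when $p-a-1<-1$, that is, when $p<a$. The integrand is bounded near $x=\lambda>0$, so there is no issue at the lower endpoint.

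\textbf{Step 2: comparing the reflected branch with $X$.} The only mildly delicate point is to show that $\mathbb{E}\lvert 2\lambda-X\rvert^{p}$ is finite exactly when $\mathbb{E}X^{p}$ is. I would do this with two-sided bounds valid on the support $x\ge\lambda$, handling the direction of the implication separately.

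For the upper bound, note $\lvert 2\lambda-x\rvert\le x+2\lambda\le 3x$. Hence $\mathbb{E}\lvert 2\lambda-X\rvert^{p}\le 3^{p}\,\mathbb{E}X^{p}$, which gives finiteness when $p<a$.

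For the lower bound, for $x\ge4\lambda$ we have $\lvert 2\lambda-x\rvert=x-2\lambda\ge x/2$. Hence
\[
  \mathbb{E}\lvert 2\lambda-X\rvert^{p}\ge 2^{-p}\,\mathbb{E}\bigl[X^{p}\mathbf{1}_{X\ge4\lambda}\bigr].
\]
When $p\ge a$, the right-hand side is infinite, because the tail integral $\int_{4\lambda}^\infty x^{p-a-1}\,\mathrm{d}x$ diverges. Since $Z$ picks up the $X$ branch with probability $\tfrac12$, the conclusion already follows from the first term in the decomposition. The bound on the reflected branch is only needed for the "if" direction.

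\textbf{Step 3: the stated consequences.} Specializing to $p=1$ gives a finite mean if and only if $a>1$. Specializing to $p=2$ gives a finite second moment if and only if $a>2$. For the variance, I would note that when $a\le1$ the variance is undefined or infinite. When $1<a\le2$ the mean is finite, so $\operatorname{Var}Z=\mathbb{E}Z^{2}-(\mathbb{E}Z)^{2}=\infty$. This matches the claim about $a\in[1+\epsilon,2]$ used in Section~\ref{sec:heavylightmarginals}.

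I expect no genuine obstacle. The only care needed is the comparison in Step 2, where $2\lambda-X$ is not bounded below by a multiple of $X$ near $x=2\lambda$; splitting off the tail region $x\ge4\lambda$ handles this.
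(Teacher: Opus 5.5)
Your argument is correct, and your Step 1 is the same Pareto moment computation the paper uses. The difference is in how you transfer the criterion from $X$ to $Z$. You condition on the Rademacher sign and treat the two branches separately. The paper instead observes that the folded coordinate satisfies $m(Z)=\lambda+\lvert Z-\lambda\rvert=X$ on either branch. The pointwise sandwich $\lvert z\rvert\le m(z)\le 2\lambda+\lvert z\rvert$ then settles both directions at once, using $(u+v)^p\le C_p(u^p+v^p)$ with $C_p=\max(1,2^{p-1})$ for the converse.

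Your mixture identity $\mathbb{E}\lvert Z\rvert^p=\tfrac12\mathbb{E}X^p+\tfrac12\mathbb{E}\lvert 2\lambda-X\rvert^p$ makes the ``only if'' direction immediate from the unreflected branch, with no $C_p$ inequality. As you note, your lower bound on $\{X\ge4\lambda\}$ is therefore superfluous. In exchange, you need the separate estimate $\lvert 2\lambda-x\rvert\le3x$ for the reflected branch. You also use the independence of the sign, which the paper's pointwise comparison never invokes; this is harmless here, since independence is part of the construction.

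Your Step 3 is a bit more explicit than the paper, which only says the mean and variance claims follow from the corresponding moment orders. You spell out that for $1<a\le2$ the finite mean forces $\operatorname{Var}Z=\mathbb{E}Z^2-(\mathbb{E}Z)^2=\infty$.
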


\begin{proof}
  First compute the positive moment of the underlying
  $X\sim\mathrm{Pareto}(\lambda,a)$:
  \[
    \mathbb{E}X^{p}
      = \int_{\lambda}^{\infty}x^{p}\,a\lambda^{a}x^{-a-1}\,\mathrm{d}x
      = a\lambda^{a}\int_{\lambda}^{\infty}x^{p-a-1}\,\mathrm{d}x ,
  \]
  which converges exactly when $p-a-1<-1$, i.e.\ when $p<a$, and equals
  $a\lambda^{p}/(a-p)$ there. At $p=a$ the divergence is logarithmic;
  above the threshold it is a power-law divergence.

  Next transfer this criterion to the mirrored variable. The construction
  \eqref{eq:mirrored-sample} gives $m(Z)=X$ on either sign branch, so
  the folded coordinate $m(Z)=\lambda+\lvert Z-\lambda\rvert$ is itself
  $\mathrm{Pareto}(\lambda,a)$. Since $\lambda>0$ we have
  $\lvert z\rvert\le m(z)$ for every $z$, with equality for $z\ge\lambda$, so
  $\mathbb{E}\lvert Z\rvert^{p}\le\mathbb{E}\,m(Z)^{p}<\infty$ whenever $p<a$.
  Conversely $m(z)\le2\lambda+\lvert z\rvert$, and
  $(u+v)^{p}\le C_p(u^{p}+v^{p})$ for $u,v\ge0$ with $C_p=\max(1,2^{p-1})$, so a
  finite $\mathbb{E}\lvert Z\rvert^{p}$ forces a finite
  $\mathbb{E}\,m(Z)^{p}$ and hence $p<a$.
  The mean and variance assertions follow by taking the corresponding
  moment orders. A finite mean here is a Lebesgue expectation, not a
  symmetric principal value.
\end{proof}

\paragraph{Averaging over the parameter distribution.}
The positive cutoff $\epsilon$ also controls the first moment after
averaging over $\Pi_\eta$. For every $a\ge1+\epsilon$, the preceding proof
gives
\[
  \mathbb{E}(|Z|\mid a)
  \le\lambda\frac{a}{a-1}
  \le\lambda\left(1+\frac{1}{\epsilon}\right).
\]
The target transformations $MP$ and $B^{-1}$ have uniformly bounded
operator norms over the parameter ranges in
\eqref{eq:shearparams} and \eqref{eq:iwasawaparams}. Consequently,
\[
  \mathbb{E}_{\omega\sim\Pi}
  \int\lVert y\rVert_2\,\mu_O(\omega)(\mathrm{d}y)<\infty.
\]
For fixed model parameters, finiteness of the expected training loss
\eqref{eq:population-risk} then follows whenever
$\mathbb{E}_{\omega,X}N^{-1}\sum_{i=1}^N
\lVert\widetilde{\Psi}_\theta(X)_i\rVert_2<\infty$.
This condition holds for the post-norm model described in Appendix~\ref{app:arch}.

\subsection{The tail of a Pareto under the signed-log transform}
\label{app:logspace}

In Section \ref{sec:ablation} we ablate preprocessing with the symlog transform
\begin{equation}
  \sigma(x) \;=\; \operatorname{sign}(x)\,\log(1+|x|),
  \qquad
  \sigma^{-1}(y) \;=\; \operatorname{sign}(y)\,(e^{|y|}-1).
  \label{eq:signedlog}
\end{equation}
The transform in \eqref{eq:signedlog} preserves sign, is close to the identity near zero, and compresses large magnitudes like a logarithm.

If $X\sim\mathrm{Pareto}(\lambda,a)$, then for $t\geq\log(1+\lambda)$,
\begin{equation}
  \mathbb{P}(\sigma(X)>t)=\lambda^{a}(e^{t}-1)^{-a}=\lambda^{a}e^{-at}(1+O(e^{-t})),
  \label{eq:logPareto}
\end{equation}
where the last equality is asymptotic as $t\to\infty$. Thus the signed-log transform converts the polynomial tail decay into an exponential tail decay of rate $a$. The transform does not create faster-than-exponential decay and hence does not by itself guarantee finiteness of the population softmax numerator and denominator over unbounded learned score scales.

\section{The operators and their parameter distributions}
\label{app:operators}

This appendix section complements Section \ref{sec:operators}.
\subsection{The mixing matrices and the parameter distributions \texorpdfstring{$\Pi_T$}{Pi T}}
\label{app:shearparams}

Section~\ref{sec:learningproblem} writes every measure in the study as
$\mu(\omega)=T(\omega_T)_\sharp\eta(\omega_\eta)$, with $T$ a linear transformation
drawn through $\omega_T\sim\Pi_T$ and $\eta$ a product of the one-dimensional
marginals of Appendix~\ref{app:distributions}. The two operator benchmarks of
Section~\ref{sec:operators} differ in the linear transformation $T$ on the input and output measures. For both operators, $\omega_T\sim\Pi_T$ is drawn once per parameter instance
and shared by that instance's input and output measure. A fresh $\omega_T$ is drawn for every trial in a training step.

\paragraph{Sheared swap.}
The latent target is $P_\sharp\eta$ with $P=\left(
\begin{smallmatrix} 0 & 1 \\ 1 & 0 \end{smallmatrix}\right)$ the coordinate
exchange, and both input measure and output measure are observed through $M$, a product of shear matrices.
Writing $\omega_T=(j,k)$,
\begin{equation}
  M(\omega_T)
    = \begin{pmatrix} 1 & j\\ 0 & 1\end{pmatrix}
      \begin{pmatrix} 1 & 0\\ k & 1\end{pmatrix}
    = \begin{pmatrix} 1+jk & j\\ k & 1\end{pmatrix},
  \qquad
  \Pi_T = \mathcal{U}\bigl([-1,1]^{2}\bigr),
  \label{eq:shearparams}
\end{equation}
so $j$ and $k$ are independent and $\det M(\omega_T)=1$ identically. The model
receives input samples from $\mu_I(\omega) = M_\sharp\eta$ and must produce samples from $\mu_O(\omega) = (MPM^{-1})_\sharp(M_\sharp\eta)$.

\paragraph{Matrix inverse.}
$B$ is a general element of $GL^{+}(2)$, the invertible real matrices
on the plane with positive determinant, parameterized as the product of a rotation, a unit upper-triangular shear, and a positive diagonal. With
$\omega_T=(\vartheta,s,\lambda_1,\lambda_2)$,
\begin{equation}
  B(\omega_T)
    = \underbrace{\begin{pmatrix}
        \cos\vartheta & -\sin\vartheta\\
        \sin\vartheta & \cos\vartheta\end{pmatrix}}_{Q(\vartheta)}
      \underbrace{\begin{pmatrix} 1 & s\\ 0 & 1\end{pmatrix}}_{U(s)}
      \underbrace{\mathrm{diag}(\lambda_1,\lambda_2)}_{D(\lambda)} ,
  \label{eq:iwasawa}
\end{equation}
with the four parameters drawn independently,
\begin{equation}
  \Pi_T = \mathcal{U}([-0.7,\,0.7])
          \otimes\mathcal{U}([-0.35,\,0.35])
          \otimes\mathcal{U}([0.75,\,1.5])^{\otimes2} .
  \label{eq:iwasawaparams}
\end{equation}
The model receives input samples from $\mu_I(\omega) = B_\sharp\eta$ and must produce samples from $\mu_O(\omega) =(B^{-2})_\sharp(B_\sharp\eta)$.

\subsection{Choice of operators}
\label{app:opchoice}

Three considerations motivate these operators.
\paragraph{Linearity/nonlinearity.} Both operators depend nonlinearly on their defining matrices, $M$ for the sheared swap and $B$ for the matrix inverse. Nonlinear dependence on $\eta$ is left for future work. 
\paragraph{Interpretability.} Working in 2D allows us to visualize the transport of samples directly. Figure \ref{fig:transport} displays the transport of samples from the model using post-norm, linear-ELU, and no symlog transformation, for the sheared swap task where $j=k=0$, on heavy-tailed measures.

\begin{figure}[t]
  \centering
  \includegraphics[width=0.78\linewidth]{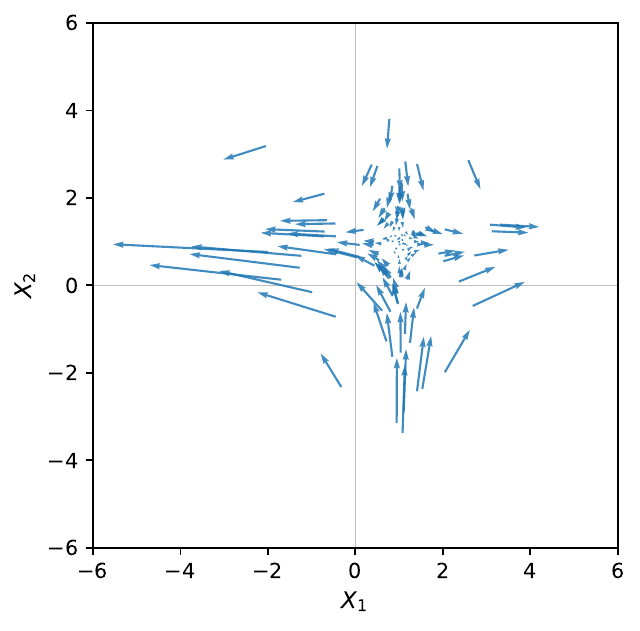}
  \caption{The transport a trained model realizes. Arrows run from each input point to the model's output, for the
  linear-ELU post-norm model evaluated at $j=k=0$, where the shear is the
  identity and the sheared swap reduces to a coordinate exchange.
  $\TransportArrows$ arrows are drawn from a set of $\TransportSamples$ points,
  sampled uniformly from the $\TransportWindowMass\%$ of the set that falls
  inside the window; for this parameter draw the two marginals carry tail
  indices $a_1=\TransportAlphaOne$ and $a_2=\TransportAlphaTwo$.}
  \label{fig:transport}
\end{figure}

\paragraph{Identifiability.} We require that the output measure is determined by the input measure for each operator, which we call identifiability. Among the four combinations considered, the matrix inverse on shifted Gaussian marginals is unidentifiable and is excluded from the learning experiments. Appendix~\ref{app:gaussid} establishes the three identifiable cases and gives the Gaussian counterexample.

\subsection{Identifiability}
\label{app:gaussid}

The model observes samples from the input measure $\mu_I(\omega)$ only, and must
produce samples from the output measure $\mu_O(\omega)$. If two parameter
instances share an input measure but not an output measure, then no model can be
exact on both.

\begin{definition}
  \label{def:identifiable}
  An operator $\Psi$ and a family of measures $\mathcal{M}(\mu,\Pi)$ are
  \emph{identifiable} when, for every pair of parameter instances
  $\omega,\omega'$ in the support of $\Pi$, including its boundary,
  \begin{equation*}
    \mu_I(\omega)=\mu_I(\omega')
    \quad\Longrightarrow\quad
    \mu_O(\omega)=\mu_O(\omega') .
  \end{equation*}
  Otherwise they are \emph{unidentifiable}.
\end{definition}

Both operators of Section~\ref{sec:operators} write the output measure as a
pushforward of the input measure by a matrix determined by $\omega_T$, namely
$MPM^{-1}$ for the sheared swap and $B^{-2}$ for the matrix inverse. Recovering
the matrix $T\in\{M,B\}$ from the input measure alone is therefore sufficient for
identifiability. Exhibiting two parameter instances that share an input measure
while their matrices give different output measures is sufficient for
unidentifiability. Throughout this subsection,
$Z=(Z_1,Z_2)^\top\sim\eta(\omega_\eta)$ has independent
coordinates drawn through $\Pi_\eta$ of Section~\ref{sec:heavylightmarginals}. The subscripts
$\ell,h$ label those coordinates by parameter interval rather than by position:
$Z_\ell$ is drawn from the lower interval and $Z_h$ from the higher one,
whether the parameter is a Pareto index or a Gaussian standard deviation.
Section~\ref{sec:heavylightmarginals} exchanges the two ranges between the
families so that $Z_2$ always carries the greater spread, so the assignment is
$(Z_1,Z_2)=(Z_h,Z_\ell)$ for the mirrored Pareto and $(Z_1,Z_2)=(Z_\ell,Z_h)$
for the shifted Gaussian. Let
$X=TZ$, so $X\sim\mu_I(\omega)=T_\sharp\eta$. Our two operators and two marginal families
make four combinations. Table~\ref{tab:identifiability} states the outcome of
each combination and the proposition that decides it.

\begin{table}[H]
  \centering
  \caption{Identifiability of each operator paired with each marginal family,
  in the every-pair sense of Definition~\ref{def:identifiable}.}
  \label{tab:identifiability}
  \small
  \begin{tabular}{lccl}
    \toprule
    marginal family & sheared swap $M$ & matrix inverse $B$ & decided by \\
    \midrule
    mirrored Pareto  & yes & yes & Propositions~\ref{prop:meanid} and~\ref{prop:tailid} \\
    shifted Gaussian & yes & no  & Propositions~\ref{prop:meanid} and~\ref{prop:gaussfiber} \\
    \bottomrule
  \end{tabular}
\end{table}

Central symmetry about a point means invariance of the law under
reflection through that point. Both latent families are centrally symmetric about
$z_*=(\lambda,\lambda)^\top$, since their fixed locations agree.
Consequently the input measure is centrally symmetric about $c=Tz_*$,
including at Pareto indices where a mean does not exist. This center is
unique: distinct centers would, by composing their reflections, make a
probability measure invariant under a nonzero translation. A sufficiently
small ball of positive mass would then have infinitely many disjoint
translates of the same mass, a contradiction. Thus $c$ is determined by
the input measure alone.

When the first absolute moments are finite, symmetry also identifies the
center with the mean. Since $\lambda$ and
$\mu_{\mathcal{N}}$ are both fixed at $1$,
\begin{equation}
  \mathbb{E}[Z_i]=1,
  \qquad\text{hence}\qquad
  \mathbb{E}[X]=T\,\mathbb{E}[Z]=T\mathbf{1},
  \qquad \mathbf{1}=(1,1)^{\top},
  \label{eq:tailmean}
\end{equation}
for either family, with $i\in\{\ell,h\}$, whenever these expectations exist. The arguments below
use the center rather than assuming existence of these expectations.

\begin{proposition}
  \label{prop:meanid}
  The sheared swap is identifiable on the mirrored Pareto family and on the
  shifted Gaussian family.
\end{proposition}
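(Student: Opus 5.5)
The plan is to recover the shear matrix $M(\omega_T)$ from the input measure alone, using only the center of symmetry $c$. As noted just before the statement, this is sufficient, because the output is the pushforward of $\mu_I(\omega)$ by $MPM^{-1}$.

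The paragraph preceding the proposition shows that $\mu_I(\omega)$ is centrally symmetric about a unique point $c=Mz_*$, where $z_*=(\lambda,\lambda)^\top=\mathbf{1}$ since $\lambda=\mu_{\mathcal{N}}=1$. Hence $c$ is a function of $\mu_I(\omega)$ alone, for both the mirrored Pareto and the shifted Gaussian families. This holds even when no mean exists. Using \eqref{eq:shearparams},
\[
  c=M(j,k)\mathbf{1}=\begin{pmatrix}1+jk+j\\ k+1\end{pmatrix}.
\]

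The key step is to invert this map on the support $[-1,1]^2$, including its boundary.

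\begin{itemize}
  \item The second coordinate gives $k=c_2-1$ directly.
  \item The first coordinate gives $c_1-1=j(1+k)=jc_2$.
  \item If $k>-1$, then $c_2>0$ and $j=(c_1-1)/c_2$ is determined. So $(j,k)$, and hence $M$, is a function of $c$.
\end{itemize}

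The main obstacle is the boundary edge $k=-1$. There $c=(1,0)^\top$ for every $j\in[-1,1]$, so the center does not determine $j$. Definition~\ref{def:identifiable} explicitly includes the boundary, so this edge must be handled. I would argue that on this edge the output measure nevertheless does not depend on $j$, or else recover $j$ from a second feature of $\mu_I$.

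\begin{itemize}
  \item At $k=-1$ we have $M=\begin{pmatrix}1-j & j\\ -1 & 1\end{pmatrix}$ and $M^{-1}=\begin{pmatrix}1 & -j\\ 1 & 1-j\end{pmatrix}$. The output is $\mu_O=(MP)_\sharp\eta$, and its law is that of $MPZ=(jZ_1+(1-j)Z_2,\;Z_2-Z_1)^\top$.
  \item The first fallback is to show directly that equal inputs force equal outputs on this edge. The input law is that of $MZ=((1-j)Z_1+jZ_2,\;Z_2-Z_1)^\top$.
  \item The second fallback is to identify $j$ from the input. The second input coordinate $Z_2-Z_1$ does not involve $j$. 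Conditional on this coordinate, the first coordinate is an affine combination whose spread depends on $j$.
  \item For the Gaussian family, I would compute the covariance of $MZ$ explicitly. The off-diagonal entry is $(1-j)(-\sigma_1^2)+j\sigma_2^2$. Combined with the diagonal entries, this yields $j,\sigma_1,\sigma_2$ whenever $\sigma_1\ne\sigma_2$; the exchanged ranges guarantee $\sigma_2\ge\sigma_1$.
  \item For the mirrored Pareto family, I would use the tail indices instead, in the spirit of Proposition~\ref{prop:tailid}. The directional tails of $MZ$ reveal the images of the coordinate axes.
\end{itemize}

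If the boundary case resists both approaches, I would restrict the claim to $k>-1$, where the center argument alone suffices. In that case I would note that the edge $k=-1$ has $\Pi_T$-measure zero.

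With $M$ recovered as a function of $\mu_I(\omega)$, the output $\mu_O(\omega)=(MPM^{-1})_\sharp\mu_I(\omega)$ is also a function of $\mu_I(\omega)$. This gives identifiability for both families.
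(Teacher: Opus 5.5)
Off the edge your argument is the paper's. The center $c=M\mathbf{1}$ gives $k=c_y-1$ and, when $c_y\neq0$, $j=(c_x-1)/c_y$, and $k=-1$ is exactly where the paper also needs extra information.

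On that edge your Gaussian route is also the paper's, but you should close it instead of hedging. Because $\det M=1$, the input covariance $C$ satisfies $\mathrm{Var}(X_y)=\sigma_1^2+\sigma_2^2$ and $\det C=\sigma_1^2\sigma_2^2$. The ordering $\sigma_1\le\sigma_2$ then fixes both variances as the roots of a known quadratic, even when they coincide, and $j=(C_{xy}+\sigma_1^2)/\mathrm{Var}(X_y)$. Your restriction $\sigma_1\neq\sigma_2$ is therefore unnecessary, and as stated it would leave the admissible instances with $\sigma_1=\sigma_2=2$ uncovered.

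For the mirrored Pareto edge you take a genuinely different route. The paper uses the seams: the input density $f_1(x-jy)\,f_2(x-jy+y)$ is continuous but not differentiable along the lines $x-jy=\lambda$ and $x-jy+y=\lambda$, whose slopes $j$ and $j-1$ give $j$. Your tail route also works, though when $a_\ell<a_h$ only the heavier-tailed column is visible at leading order. The zero set of $F$ in \eqref{eq:tailatom} is the line orthogonal to $(j,1)^\top$, which fixes $j$ because the second entry is known. At the common endpoint $a_\ell=a_h=2$, the quadratic form $MM^\top$ has off-diagonal entry $2j-1$, which again gives $j$.

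The seam argument is local and elementary, needing only continuity and a derivative jump. Yours rests on the asymptotic squeeze of Proposition~\ref{prop:tailid}. In exchange, since $M\in SL(2)\subset GL^{+}(2)$, that proposition's argument applies verbatim to $M$ and identifies it on the whole Pareto support, with no separate edge case.

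Finally, drop both fallbacks. On the edge the output, the law of $(jZ_1+(1-j)Z_2,\,Z_2-Z_1)^\top$, does depend on $j$. Restricting to $k>-1$ would not prove the statement, since Definition~\ref{def:identifiable} explicitly includes the boundary.
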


\begin{proof}
  Write $c=(c_x,c_y)^\top$ for the unique input center and recall that
  the common latent location $\lambda$ is known and positive. Multiplying $M$ in
  \eqref{eq:shearparams} by $z_*$ determines $k$ from the vertical
  coordinate of $c$ and, when that coordinate is nonzero, determines $j$:
  \begin{equation}
    k = \frac{c_y-\lambda}{\lambda},
    \qquad
    j = \frac{c_x-\lambda}{c_y}.
    \label{eq:meanid}
  \end{equation}
  It remains to treat the shear edge on which $c_y$ vanishes. The
  formula for $k$ still applies there and fixes it at its lower allowed
  endpoint; only $j$ remains unknown.
  Write $X_x,X_y$ for the observed coordinates. On this edge,
  \[
    X_y=Z_2-Z_1,\qquad X_x=Z_1+jX_y .
  \]

  For mirrored Pareto marginals, let $f_1,f_2$ be the densities of $Z_1,Z_2$.
  The inverse coordinate change has unit Jacobian determinant, so
  independence gives the continuous input density
  \[
    (x,y)\longmapsto f_1(x-jy)\,f_2(x-jy+y).
  \]
  Each factor is smooth away from its seam and has distinct, nonzero
  one-sided derivatives there. Thus the density is smooth off the lines
  \[
    x-jy=\lambda,\qquad x-jy+y=\lambda,
  \]
  and is not differentiable on either line away from their intersection:
  the other factor is smooth and strictly positive, so it cannot remove
  the derivative jump. Equality of measures implies equality everywhere
  of these continuous densities, hence determines this unordered pair
  of lines. Their slopes, with $x$ viewed as a function of $y$, are
  distinct, and the larger slope is $j$. This recovers $j$ for every
  allowed pair of Pareto indices.

  For Gaussian marginals, let $v,w$ be the variances of $Z_1,Z_2$,
  and let $C=\operatorname{Cov}(X)$, with $C_{xy}$ its
  off-diagonal entry. The shifted Gaussian gives its second coordinate the
  higher standard deviation interval, so $v\le w$. Independence and the unit
  determinant of $M$ give
  \[
    S:=\operatorname{Var}(X_y)=v+w,\qquad
    \det C=vw,\qquad C_{xy}=jS-v.
  \]
  The sum and product determine the unordered positive roots of
  $t(S-t)=\det C$; ordering determines $v,w$, also when they coincide.
  Since $S$ is positive, $j=(C_{xy}+v)/S$ is determined.

  Hence the input determines $M$ on the edge as well as off it, for
  either family. The pushforward
  $\mu_O=(MPM^{-1})_\sharp\mu_I$ then determines the output for every
  parameter instance.
\end{proof}

The common nonzero latent location supplies the center information used
in \eqref{eq:meanid}. On the exceptional shear edge, the Pareto density
or the Gaussian covariance supplies the remaining information.

\begin{proposition}
  \label{prop:tailid}
  The matrix inverse is identifiable on the mirrored Pareto family.
\end{proposition}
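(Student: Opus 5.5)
The plan is to show that the input measure $\mu_I(\omega)=B_\sharp\eta$ determines the matrix $B$ itself, not only $\eta$. This is needed because the output $\mu_O(\omega)=B^{-1}_\sharp\eta$ is the law of $B^{-1}Z=B^{-2}X$. Write $r_1,r_2$ for the rows of $B^{-1}$, so that $r_i^\top X=Z_i$.

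\emph{Step 1: the density and its seam lines.} Since $B$ is invertible, $X$ has the continuous density
\[
x\longmapsto \lvert\det B\rvert^{-1}f_1\bigl(r_1^\top x\bigr)\,f_2\bigl(r_2^\top x\bigr).
\]
As in the proof of Proposition~\ref{prop:meanid}, each factor is smooth off its seam and has distinct nonzero one-sided derivatives across it, while the other factor is smooth and strictly positive there. Hence the density is non-differentiable exactly on the two lines $\{r_i^\top x=\lambda\}$, away from their intersection.

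Equal measures have equal continuous densities, so $\mu_I$ determines this unordered pair of lines. The lines are non-parallel because $r_1,r_2$ are linearly independent. They meet at the center $c=Bz_*$, which by the earlier uniqueness argument is determined by the input. Each line determines its normal direction, that is, $r_i$ up to a nonzero scalar $t_i$. The constraint $t_i\,\hat r_i^\top c=\lambda$, with $\lambda\neq0$, then fixes $t_i$, sign included. So the input determines the unordered set $\{r_1,r_2\}$, equivalently $B^{-1}$ up to the row exchange $B^{-1}\mapsto PB^{-1}$.

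\emph{Step 2: ordering the rows.} I would use the tail index. The law of $r_i^\top X$ is mirrored Pareto with index $a_i$, and its survival function, as in \eqref{eq:mirrored-density}, determines $a_i$.

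For the mirrored Pareto family, $Z_1=Z_h$ has index in $[2,3]$ and $Z_2=Z_\ell$ has index in $[1+\epsilon,2]$. If the two recovered indices differ, the row with the larger index is $r_1$, and $B^{-1}$ is determined.

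The only ambiguity is the boundary case $a_1=a_2=2$. Definition~\ref{def:identifiable} explicitly includes the boundary, so this case must be handled. There the two candidates are $B$ and $BP$. Since $\det(BP)=-\det B<0$, the matrix $BP$ lies outside $GL^+(2)$ and is not attained by the parameterization \eqref{eq:iwasawa}, because $\det Q=\det U=1$ and $\lambda_1\lambda_2>0$. So $B$ is unique in this case as well.

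\emph{Step 3: conclusion.} With $B$ determined, the output $\mu_O=B^{-2}_\sharp\mu_I$ is determined, which is identifiability.

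I expect Step 1 to be the main obstacle. The non-differentiability claim must be made rigorous for a general invertible $B$, so that the seam lines are genuinely visible in the density. The normalization through the center must also fix both the scale and the sign of each row. The tie case in Step 2 looks delicate, but the determinant sign resolves it cleanly.
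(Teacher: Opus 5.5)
Your proof is correct, but it takes a different route from the paper's. The paper's proof does not use the seams at all. It computes the directional tail functional $F(u)=\lim_{r\to\infty}r^{a_\ell}\mathbb{P}(|\langle u,X-c\rangle|>r)$ by a squeeze argument. This gives $\lambda^{a_\ell}|\langle u,b_\ell\rangle|^{a_\ell}$ when $a_\ell<a_h$, and $\lambda^{2}u^\top BB^\top u$ when $a_\ell=a_h=2$. In the first case, $F$ gives the heavy-tailed column $b_\ell$ up to sign, and the center gives $b_h=m-b_\ell$ with $m=c/\lambda$. In the second case, it recovers $d=b_\ell-b_h$ up to sign from $dd^\top=2BB^\top-mm^\top$. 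Positive determinant then removes the last twofold ambiguity.

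You instead extend the kink argument, which the paper uses only on the shear edge of Proposition~\ref{prop:meanid}, to a general invertible $B$. The non-differentiability lines $\{r_i^\top x=\lambda\}$ of the continuous density determine both rows of $B^{-1}$ at once, since a line missing the origin fixes its $r_i$ because $\lambda\neq0$. This leaves only $B$ versus $BP$. Your route is more elementary, with no limits and no case split on the indices.

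Your Step~2 can in fact be dropped. Since $\det(BP)=-\det B$, positive determinant excludes $BP$ for every pair of indices, not only at $a_1=a_2=2$. So your argument never needs the tail indices.

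The cost is reliance on the exact kink of the mirrored Pareto density at $\lambda$, with one-sided derivatives $\pm\tfrac12 a(a+1)\lambda^{-2}$. Any smoothing of the seam would erase this local feature. The paper's argument uses only the polynomial tails and the center of symmetry, so it does not depend on the density's local shape. It also ties identifiability to the heavy tails that separate this family from the Gaussian counterexample in Proposition~\ref{prop:gaussfiber}.
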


\begin{proof}
  Write $a_\ell,a_h$ for the indices drawn from the lower and higher
  intervals and $b_\ell,b_h$ for the columns of $B$ that multiply
  $Z_\ell,Z_h$; on the mirrored Pareto family these are the second and
  first columns of $B$ respectively. Let $c$ be the unique
  input center. Then
  \[
    m:=c/\lambda=b_\ell+b_h .
  \]
  The ranges in Section \ref{sec:heavylightmarginals} imply $a_\ell\le a_h$, with equality
  possible only at their common endpoint.

  First justify the directional tail calculation. For a scalar mirrored
  variable $W$ of index $a$, folding gives the centered absolute tail
  \[
    \mathbb{P}(|W-\lambda|>r)
      =\bigl(\lambda/(\lambda+r)\bigr)^a
      \sim\lambda^a r^{-a}
      \qquad (r\to\infty).
  \]
  Here $\sim$ denotes asymptotic equivalence. Scaling $W-\lambda$
  by a nonzero scalar multiplies the tail constant by the
  absolute scalar raised to $a$.
  For independent scaled centered coordinates $U,V$, fix
  $0<\gamma<1$.
  The disjoint events on which one summand exceeds $r+\gamma r$
  in absolute value and the other is at most $\gamma r$ give
  \[
  \begin{aligned}
    \mathbb{P}(|U+V|>r)
    &\ge \mathbb{P}(|U|>r+\gamma r)\mathbb{P}(|V|\le\gamma r)\\
    &\quad+\mathbb{P}(|V|>r+\gamma r)\mathbb{P}(|U|\le\gamma r).
  \end{aligned}
  \]
  Conversely, if the sum exceeds $r$ and the summands do not both
  exceed $\gamma r$, one must exceed $r-\gamma r$. Therefore
  \[
  \begin{aligned}
    \mathbb{P}(|U+V|>r)
    &\le \mathbb{P}(|U|>r-\gamma r)
         +\mathbb{P}(|V|>r-\gamma r)\\
    &\quad+\mathbb{P}(|U|>\gamma r)
             \mathbb{P}(|V|>\gamma r).
  \end{aligned}
  \]
  After multiplication by $r$ raised to the smaller index, the product
  term vanishes: its decay exponent is the sum of the positive indices,
  by independence. The lower and upper bounds then squeeze the
  rescaled tail as $r$ tends to infinity and $\gamma$ subsequently
  tends to zero. Thus the leading tail constants add for equal indices;
  only the heavier tail remains
  for unequal indices. An absent summand is simply omitted.

  For a vector $u$ in the observation plane, use
  $U=\langle u,b_\ell\rangle(Z_\ell-\lambda)$ and
  $V=\langle u,b_h\rangle(Z_h-\lambda)$. Their sum is
  $\langle u,X-c\rangle$, so the calculation yields
  \begin{equation}
    F(u):=\lim_{r\to\infty}r^{a_\ell}
       \mathbb{P}\bigl(|\langle u,X-c\rangle|>r\bigr)
    =
    \begin{cases}
      \lambda^{a_\ell}|\langle u,b_\ell\rangle|^{a_\ell},
        &a_\ell<a_h,\\
      \lambda^{a_\ell}u^\top BB^\top u,
        &a_\ell=a_h .
    \end{cases}
    \label{eq:tailatom}
  \end{equation}
  The equal-index expression is quadratic because the common index is
  $2$. Each nonzero projection has an absolute tail with a power-law
  exponent. The smallest such exponent is $a_\ell$, because some
  direction has a nonzero coefficient along $b_\ell$. This identifies
  $a_\ell$ from the input law without knowing the columns in advance. Hence
  $F$ is also known. In the unequal-index case $F$ vanishes on a line,
  whereas in the equal-index case it is positive in every nonzero
  direction because $B$ is invertible. The input therefore distinguishes
  these cases, even when the smaller index is the common endpoint.

  If $a_\ell<a_h$, the first line of \eqref{eq:tailatom} determines
  $b_\ell$ up to sign: the zero set of $F$ gives the perpendicular line and
  its values fix the magnitude. For either sign, the other column must
  be $b_h=m-b_\ell$, so each candidate matrix has determinant
  $\pm\det(b_\ell,m)$, the outer sign fixed by the family's coordinate
  order and the inner one by the sign choice. The two candidates therefore
  have determinants of opposite sign. They are nonzero because the actual
  $B$ is invertible.
  Positive determinant selects exactly one candidate.

  If $a_\ell=a_h$, the quadratic form determines $G=BB^\top$.
  Put $d=b_\ell-b_h$. Expanding the outer products gives
  \[
    dd^\top=2G-mm^\top .
  \]
  Invertibility excludes equal columns, so $d$ is nonzero.
  Thus this rank-one matrix determines $d$ up to sign. For each
  sign, the linear equations
  \[
    b_\ell+b_h=m,\qquad b_\ell-b_h=d
  \]
  determine the columns uniquely. Changing the sign exchanges the
  columns and reverses the determinant. Again, positive determinant
  selects exactly one candidate.

  Thus in every case the input law determines the full matrix $B$,
  without a first-moment assumption. Applying the inverse matrix to the
  input recovers the latent law $\eta=(B^{-1})_\sharp\mu_I$; applying
  it again gives the prescribed output $\mu_O=(B^{-1})_\sharp\eta$.
\end{proof}

\begin{proposition}
  \label{prop:gaussfiber}
  The matrix inverse is unidentifiable on the shifted Gaussian family.
\end{proposition}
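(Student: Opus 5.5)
The plan is to exhibit two parameter instances $\omega,\omega'$ with $\mu_I(\omega)=\mu_I(\omega')$ but $\mu_O(\omega)\neq\mu_O(\omega')$. This is sufficient for unidentifiability by Definition~\ref{def:identifiable}. Everything is Gaussian, so equality of laws reduces to equality of means and covariances. Write $\Sigma=\mathrm{diag}(\sigma_1^2,\sigma_2^2)$ for the latent covariance, with $\sigma_1\in[1+\epsilon,2]$ and $\sigma_2\in[2,3]$ under the exchanged ranges. The latent mean is $\mathbf{1}$. Then
\[
\mu_I=\mathcal{N}(B\mathbf{1},B\Sigma B^\top),\qquad \mu_O=\mathcal{N}(B^{-1}\mathbf{1},B^{-1}\Sigma B^{-\top}).
\]

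I would take $B=I$, which corresponds to $\vartheta=0$, $s=0$, $\lambda_1=\lambda_2=1$, an interior point of \eqref{eq:iwasawaparams}. I would also fix a latent pair $\Sigma$ in the interior, say $\sigma=(1.5,2.5)$. For a second latent pair $\Sigma'$ I would set
\[
B'=\Sigma^{1/2}O\,\Sigma'^{-1/2},
\]
where $O$ is a rotation. With this choice $B'\Sigma'B'^\top=\Sigma$, so the input covariances agree automatically. Matching the means, $B'\mathbf{1}=\mathbf{1}$, is equivalent to $O\,\Sigma'^{-1/2}\mathbf{1}=\Sigma^{-1/2}\mathbf{1}$. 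A rotation achieving this exists exactly when the two vectors have equal length, that is, when
\[
\sigma_1'^{-2}+\sigma_2'^{-2}=\sigma_1^{-2}+\sigma_2^{-2}.
\]
So I would perturb $\sigma_1$ slightly upward and solve for $\sigma_2'$, which moves by a small amount. This gives $\Sigma'\neq\Sigma$ with both pairs inside their intervals. Since $O$ is a rotation and the diagonal factors are positive, $\det B'>0$.

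The outputs then differ. The output covariance for $\omega'$ is
\[
B'^{-1}\Sigma B'^{-\top}=\Sigma'^{1/2}O^\top\Sigma^{-1/2}\,\Sigma\,\Sigma^{-1/2}O\,\Sigma'^{1/2}=\Sigma',
\]
while for $\omega$ it is $\Sigma$. These are distinct diagonal matrices.

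The main obstacle is checking that $B'$ actually lies in the support of $\Pi_T$, meaning that its decomposition \eqref{eq:iwasawa} has parameters in the prescribed ranges. I would handle this by continuity. As $\Sigma'\to\Sigma$, the required rotation $O\to I$, so $B'\to I$. The map $B\mapsto(\vartheta,s,\lambda_1,\lambda_2)$, namely QR with positive diagonal, is continuous near $I$, and $I$ corresponds to an interior parameter point. Hence a sufficiently small perturbation keeps all four parameters in range. If an explicit witness is preferred, one can compute $\vartheta$, $s$ and $\lambda_{1,2}$ for a concrete small perturbation and check the bounds directly.
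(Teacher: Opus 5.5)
Everything up to the last step works. The choice $B'=\Sigma^{1/2}O\Sigma'^{-1/2}$ gives $B'\Sigma'B'^\top=\Sigma$. The equal-length condition $\sigma_1'^{-2}+\sigma_2'^{-2}=\sigma_1^{-2}+\sigma_2^{-2}$ is exactly what allows $B'\mathbf{1}=\mathbf{1}$. The continuity argument keeps $B'$ in the support of $\Pi_T$; for instance, $\sigma_1'=1.6$ gives $\vartheta\approx-0.16$, $s\approx-0.10$ and $(\lambda_1,\lambda_2)\approx(0.95,1.15)$.

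The gap is in the step that actually proves unidentifiability. The output of $\omega'$ is $(B'^{-1})_\sharp\eta(\omega'_\eta)=(B'^{-2})_\sharp\mu_I$, whose covariance is $B'^{-1}\Sigma'B'^{-\top}$. The matrix you computed, $B'^{-1}\Sigma B'^{-\top}$, is the covariance of $(B'^{-1})_\sharp\mu_I$. That is the \emph{latent} law of $\omega'$, and the identity $B'^{-1}\Sigma B'^{-\top}=\Sigma'$ merely restates $B'\Sigma'B'^\top=\Sigma$. Different latent laws do not by themselves give different outputs. Nor can the means help: with $B=I$ and $B'\mathbf{1}=\mathbf{1}$, both output means equal $\mathbf{1}$. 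This is exactly why the paper takes $B=\mathrm{diag}(b_x,b_y)$ with $b_x\ne b_y$ and compares output means. Its mean difference $(H-I)B^{-1}z_*$ vanishes when $B$ is a multiple of $I$, because $H\mathbf{1}=\mathbf{1}$.

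The repair is short. From $(\det B')^2\det\Sigma'=\det\Sigma$ you get $\det\bigl(B'^{-1}\Sigma'B'^{-\top}\bigr)=(\det\Sigma')^2/\det\Sigma$. This equals the output determinant $\det\Sigma$ of $\omega$ only if $\det\Sigma'=\det\Sigma$. Suppose both the product and the sum of $\sigma_1'^{-2},\sigma_2'^{-2}$ matched those of $\sigma_1^{-2},\sigma_2^{-2}$. Then the two unordered pairs would coincide, forcing $\sigma_1'=\sigma_2=2.5$, which lies outside the range of $\sigma_1'$. Hence the output covariances differ and the argument closes.

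With this fix your proof is a valid alternative to the paper's. You parametrize the same fiber, namely the matrices $H=B'^{-1}B$ with $H\mathbf{1}=\mathbf{1}$ and $H\Sigma H^\top$ diagonal. You do so via a rotation and the invariant $\mathbf{1}^\top\Sigma^{-1}\mathbf{1}$, instead of the paper's explicit $g(h)$. The cost of working at the scalar base point $B=I$ is that you must separate the outputs by covariance rather than by mean.
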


\begin{proof}
  Choose $B=\operatorname{diag}(b_x,b_y)$ with distinct scales $b_x,b_y$ strictly inside
  the scale interval in \eqref{eq:iwasawaparams}, and choose each latent
  standard deviation strictly inside its interval in
  Section~\ref{sec:heavylightmarginals}.
  This is an interior parameter instance: the rotation and shear are
  at their central values.
  Write $D=\mathrm{diag}(\sigma_\ell^{2},\sigma_h^{2})$ for the latent
  covariance, in the shifted Gaussian's coordinate order $(Z_1,Z_2)=(Z_\ell,Z_h)$, so that
  $X\sim\mathcal{N}(B\mathbf{1},\,BDB^{\top})$ because the latent mean
  is $\mathbf{1}$. For a sufficiently small real perturbation $h$, define
  \begin{equation}
    H = \begin{pmatrix} 1+h & -h \\ g & 1-g \end{pmatrix},
    \qquad
    g = \frac{h\,\sigma_h^{2}}
             {(1+h)\sigma_\ell^{2} + h\,\sigma_h^{2}} .
    \label{eq:invlinfiber}
  \end{equation}
  The denominator defining $g$ is positive when $h$ vanishes, where
  $H=I$, with $I$ the identity matrix.
  For sufficiently small $h$, it stays positive and $H$ stays invertible
  with positive determinant. The row sums give
  $H\mathbf{1}=\mathbf{1}$, so $H$ and its inverse preserve the
  common latent location. The off-diagonal entry of $HDH^\top$ is linear in $g$;
  setting it to zero gives exactly the expression in \eqref{eq:invlinfiber}.
  Thus $HDH^\top$ is diagonal; its entries are positive by positive
  definiteness. Put $B'=BH^{-1}$ and
  $D'=\mathrm{diag}(\sigma_\ell'^{2},\sigma_h'^{2})=HDH^{\top}$. Then
  $B'\mathbf{1}=B\mathbf{1}$ and $B'D'B'^{\top}=BDB^{\top}$, so the two parameter
  instances share an input measure, since a Gaussian law is determined
  by its mean and covariance.

  To check that the perturbed instance remains in the prescribed family,
  let $h$ tend to zero. Then $B'$ approaches the chosen positive
  diagonal $B$, and the positive square roots of the diagonal entries
  of $D'$ approach the chosen latent standard deviations.
  Gram--Schmidt applied to the columns of $B'$ gives continuous
  coordinates near $B$: the first column has positive length, the
  determinant stays positive, and the rotation angle has a continuous
  branch near the central value. Because every original parameter lies
  strictly inside its allowed interval, all perturbed parameters remain
  admissible for sufficiently small $h$.

  Finally, the output means are the images of $z_*$ under the respective
  inverse matrices. Since $B'^{-1}=HB^{-1}$, their difference is
  $(H-I)B^{-1}z_*$. Its horizontal component is
  \[
    \lambda h\,\frac{b_y-b_x}{b_xb_y},
  \]
  which is nonzero for every sufficiently small nonzero $h$, since
  $b_x\ne b_y$. Such an $h$ therefore gives admissible instances with
  identical input laws and distinct output means, proving
  unidentifiability.
\end{proof}

The counting behind Proposition~\ref{prop:gaussfiber} is that a
two-dimensional Gaussian exposes five numbers, two in its mean and three in its
covariance, while the matrix inverse carries four parameters in $B$ and two in
$\sigma$. Proposition~\ref{prop:gaussfiber} establishes unidentifiability by constructing two parameter instances with the same input measure and different output measures. For the mirrored Pareto family,
\eqref{eq:tailatom} supplies either a latent axis or, at the equal-index
endpoint, a quadratic tail form. Together with the center and positive
determinant, either determines the full matrix.

\section{Model and compute}
\label{app:model}

\subsection{The model in full}
\label{app:arch}

The model $\widetilde{\Psi}_\theta:\mathbb{R}^{N\times2}\to\mathbb{R}^{N\times2}$ in Section~\ref{sec:setting} is a six-layer non-causal transformer. Writing
$d=2$ for the state dimension and $D=256$ for the embedding dimension, the forward
pass is
\begin{equation}
  X \;\xrightarrow{\;\sigma\;}\;
  \mathbb{R}^{N\times d}
  \;\xrightarrow{\;W_{\mathrm{in}}\;}\;
  \mathbb{R}^{N\times D}
  \;\xrightarrow{\;\text{6 blocks}\;}\;
  \mathbb{R}^{N\times D}
  \;\xrightarrow{\;W_{\mathrm{out}}\;}\;
  \mathbb{R}^{N\times d}
  \;\xrightarrow{\;\sigma^{-1}\;}\;
  \widehat{Y},
  \label{eq:flow}
\end{equation}
where $W_{\mathrm{in}}$ and $W_{\mathrm{out}}$ are untied affine input and output maps, and $\sigma$ is the symlog transform
\eqref{eq:signed-log-main} when used and the identity
otherwise.

\paragraph{Block structure and residual scaling.}
Each block runs an attention sublayer and a feed-forward sublayer, both wrapped
in residual connections, with a single head, inner width $512$ and a GELU
nonlinearity. Under the post-norm placement of Section~\ref{sec:norm} a sublayer
$f$ computes $x\mapsto\mathrm{LN}(\alpha_{\mathrm{RS}}x+f(x))$; under pre-norm it
computes $x\mapsto x+f(\mathrm{LN}(x))$. The scalar
$\alpha_{\mathrm{RS}}=4.899$ is inspired by DeepNorm residual scaling \citep{wang2022deepnet}.

For fixed parameters, the final LayerNorm in the post-norm model bounds its outputs, including after the affine read-out and inverse symlog. The bound depends on the learned parameters, so the output range can adapt during training to approximate the target tails over a finite range. The Hill diagnostic in Appendix~\ref{app:tailstat} assesses this finite-sample tail approximation.

Table~\ref{tab:hyperparams} lists the shared model and optimizer settings, and Appendix~\ref{app:compute} reports the training costs.

\subsection{The attention kernel suite}
\label{app:kernelviz}

Table~\ref{tab:family} defines the four maps $g$ that replace the exponential in
\eqref{eq:kernel-attention}, and Figure~\ref{fig:kernels} shows them.

\begin{figure}[H]
  \centering
  \includegraphics[width=\linewidth]{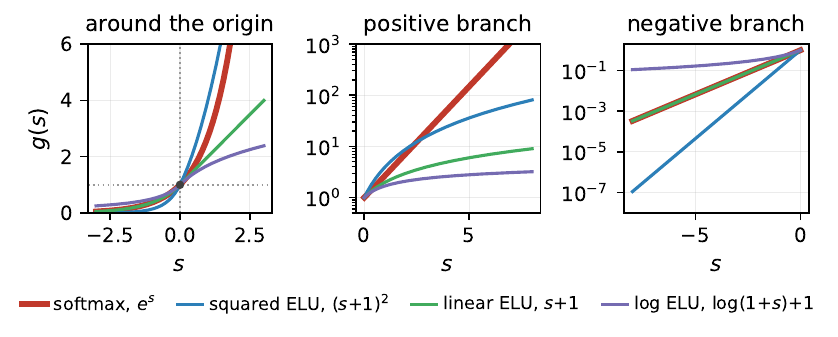}
  \caption{The attention kernel suite of Table~\ref{tab:family}. Left: the four
  maps on a linear vertical axis over the scores where they differ most
  visibly, with the common value $g(0)=1$ marked. Center: the same maps on a
  logarithmic vertical axis over a wider positive range, where the ordering of
  the last column of Table~\ref{tab:family} --- exponential, quadratic, linear,
  logarithmic --- appears as a difference in slope. Right: The same maps over a wider negative range. Every map is strictly positive, so the row normalizer in
  \eqref{eq:kernel-attention} cannot vanish.}
  \label{fig:kernels}
\end{figure}

The negative branches in Table~\ref{tab:family} match the value and slope of the positive branches at the origin, so each map is $C^1$ with $g(0)=1$.

\subsection{Hyperparameters}
\label{app:hyperparams}

Table~\ref{tab:hyperparams} lists the configuration shared across the models in Figures~\ref{fig:expswap} and~\ref{fig:expinvlin}. 

\begin{table}[H]
  \centering
  \caption{Shared model and optimizer settings. The main experiments vary the attention kernel and data transform; the supplementary experiments also vary normalization placement. Attention arithmetic is described in Appendix~\ref{app:compute}.}
  \label{tab:hyperparams}
  \begin{tabular}{ll}
  \toprule
  hyperparameter & value \\
  \midrule
  transformer blocks & 6 \\
  model dimension $d_{\mathrm{model}}$ & 256 \\
  feed-forward width & 512 \\
  attention heads & 1 \\
  feed-forward activation & gelu \\
  normalization type & layer \\
  dropout & 0.0 \\
  causal mask & False \\
  parameters & 3,163,906 \\
  loss & Energy Distance \\
  loss exponent $p$ & 1.0 \\
  optimizer & AdamW \\
  learning rate & 0.001 \\
  weight decay & 0.0 \\
  schedule & cosine with warmup \\
  gradient clip & 1.0 \\
  skip non-finite step & True \\
  steps & 50000 \\
  samples per trial $N$ & 16384 \\
  trials per step & 64 \\
  training attention precision & bfloat16 (softmax), float32 (other kernels) \\
  evaluation precision & float32 \\
  seed & 0 \\
  \bottomrule
  \end{tabular}
\end{table}

\subsection{Compute and resource requirements}
\label{app:compute}

Every model was trained on a single NVIDIA H200 (141\,GB HBM3e). Each of the $50{,}000$ training steps in Table~\ref{tab:hyperparams} comprises one full-graph forward pass and one optimizer update over a batch of $64\times16384$ points.

Our code is written in PyTorch. During training, softmax models use scaled dot-product attention with the FlashAttention backend \citep{dao2022flashattentionfastmemoryefficientexact} under bfloat16 autocast, without materializing the $N\times N$ score matrix. The slower-growing kernels use custom attention in float32, although the transformer as a whole remains under autocast. A full score tensor would require $64\times16384^2\times4\mathrm{B}=68.7\mathrm{GB}=64\mathrm{GiB}$ per block. We instead compute query tiles capped at $2^{31}$ score elements ($8$\,GiB per float32 tile) and checkpoint each tile for recomputation during backpropagation. The custom implementation takes $11.84$\,s per training step; compiling it with \texttt{torch.compile} reduces this to $2.51$\,s. All models are evaluated in float32 without autocast.

\begin{table}[H]
  \centering
  \caption{Measured on an H200 at the trained shapes.
  Per-step times are end-to-end training steps and peak memory is the whole
  step. The sub-exponential kernels are more expensive to train since they lack a fused kernel.}
  \label{tab:compute}
  {\small\begin{tabular}{lrrr}
    \toprule
    run family & s / step & peak memory & hours for $50{,}000$ steps \\
    \midrule
    softmax (FlashAttention, bf16) & 1.87 & 69.4\,GiB & 26 \\
    sub-exponential (chunked + recompute) & 2.51 & 78.8\,GiB & 35 \\
    \bottomrule
  \end{tabular}}
\end{table}

\section{The tail-sensitivity testbed}
\label{app:tailsignal}

Section~\ref{sec:tailsensitivity} reports three statistics on the testbed of
spliced measure pairs at five sample budgets. This appendix gives the
construction of that testbed and defines the normalization the vertical axes of
Figure~\ref{fig:metricres} use.

The reference index is $a_{\mathrm{ref}}=1.5$, so the reference has a finite first moment but an infinite second moment. For the spliced candidates with $a_*>1$, the population energy distance is finite. The displayed Wasserstein distances are between finite empirical measures; when $a_*\ne a_{\mathrm{ref}}$, the differing tail exponents make the corresponding population $W_2$ infinite. The Hill gap is an auxiliary diagnostic.

\subsection{Splicing distributions to isolate the tail}
\label{app:splice}

Changing a Pareto tail index alters both the body and the tail. Splicing joins two Pareto distributions at a seam, allowing the spliced distribution to follow a reference distribution up to a specific quantile. Fix a reference index
$a_{\mathrm{ref}}$, a scale $\lambda$, a seam quantile $q_0\in(0,1)$ and the
corresponding seam value $u_0=Q_{\mathrm{ref}}(q_0)=\lambda(1-q_0)^{-1/a_{\mathrm{ref}}}$. The spliced candidate with index $a_*$ after the seam has distribution function
\begin{equation}
  G(x) \;=\;
  \begin{cases}
    0, & x < \lambda,\\[2pt]
    1 - \left(\dfrac{x}{\lambda}\right)^{-a_{\mathrm{ref}}}, & \lambda \le x \le u_0,\\[2pt]
    q_0 + (1-q_0)\left[1 - \left(\dfrac{x-b}{u_0-b}\right)^{-a_*}\right],
      & x > u_0,
  \end{cases}
  \qquad
  b \;=\; u_0\left(1-\frac{a_*}{a_{\mathrm{ref}}}\right).
  \label{eq:splice}
\end{equation}
The choice of $b$ makes the two densities agree at the seam, where $G(u_0)=q_0$, and the probability mass above the seam is $1-q_0$. The branch on $x>u_0$ in \eqref{eq:splice} is a
Pareto of index $a_*$, scale $u_0-b$, translated by $b$. Figure~\ref{fig:splice} shows this construction.

\begin{figure}[H]
  \centering
  \includegraphics[width=0.9\linewidth]{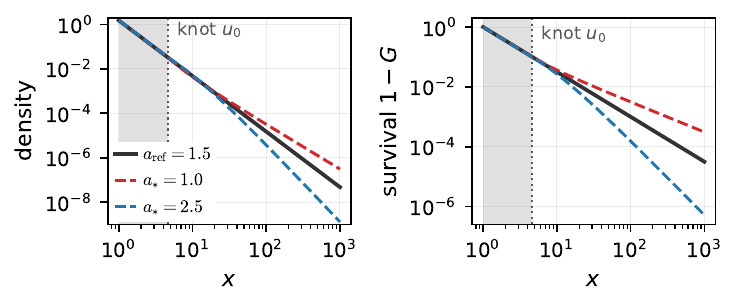}
  \caption{The splice \eqref{eq:splice} at $q_0=0.9$, reference index
  $a_{\mathrm{ref}}=\SSRefAlpha{}$ and scale $\lambda=1$, on log--log axes.
  Left: densities. Right: survival functions. The shaded region lies below the
  seam $u_0$ (dotted, denoted "knot"), where every candidate coincides with the reference; the
  candidates differ from it only to the right of the seam, over a fixed mass
  $1-q_0$.}
  \label{fig:splice}
\end{figure}

The one-dimensional sweep in Figure~\ref{fig:metricres} uses $\lambda=1$,
seam quantiles $q_0\in\{0.5,0.9,0.99\}$, and candidate tail indices
$a_*\in\{0.6,0.8,1.0,1.2,1.5,1.8,2.2,2.8,3.5\}$.
With random seed $0$, each plotted point averages $24$ independent repetitions
at $N\in\{256,1024,4096\}$, $12$ at $N=16384$, and $6$ at $N=65536$.
For the Hill gap, we apply the estimator in \eqref{eq:hill} separately to each
sample, with $k=\max\{2,\operatorname{round}(N(1-q_0))\}$ chosen from the known
seam quantile. The threshold in each sample is its $(k+1)$st largest value.

\subsection{Noise units}
\label{app:noiseunits}

Every panel of Figure~\ref{fig:metricres} plots
a shifted and scaled statistic rather than the statistic itself.

Fix one statistic, one dimension, one budget $N$, and one seam quantile $q_0$.
The sweep evaluates the statistic at each candidate index $a_j$ on the grid over
repeated independent draws, recording the mean $\mu_j$ and the
standard deviation $s_j$ between trials. Let $j^\star=\arg\min_j |a_j-a_{\mathrm{ref}}|$ be the
plotted point at the reference index $1.5$. The plotted quantity is
\begin{equation}
  z_j \;=\; \frac{\mu_j - \mu_{j^\star}}{\varsigma},
  \qquad
  \varsigma \;=\; \operatorname{median}\{\,s_j : s_j \text{ finite}\,\}.
  \label{eq:noiseunits}
\end{equation}
Across the runs in Figure~\ref{fig:metricres}, the smallest $\varsigma$ was
approximately $10^{-5}$, so no numerical floor was needed for this normalization.

\section{Pre-norm and other results}
\label{app:pre-norm}
This appendix reports supplementary pre-norm experiments and numerical summaries of the full grid. The pre-norm runs do not exhibit ensemble collapse across the tested kernels and transformations. Figures~\ref{fig:quantile} and~\ref{fig:quantileinvlin} show the distribution of energy distances over evaluation trials.

\begin{table}[H]
  \centering
  \caption{Sheared swap energy distance as a multiple of the noise
  floor at $N=16384$, under the median (top) and the mean (bottom). Bold marks ratios whose bootstrap interval contains or lies below one.
  \emph{collapse} marks runs with degenerate output ensembles, as in Figure~\ref{fig:expswap}.}
  \label{tab:energysummary}
  {\small
  \textbf{Median}\par\vspace{2pt}
  \begin{tabular}{llcccc}
  \toprule
  & & \multicolumn{2}{c}{mirrored Pareto} & \multicolumn{2}{c}{shifted Gaussian} \\
  \cmidrule(lr){3-4}\cmidrule(lr){5-6}
  norm & kernel & none & signed-log & none & signed-log \\
  \midrule
  pre-norm & Softmax ($e^{s}$) & $\mathbf{0.857}$ & $\mathbf{0.817}$ & $\mathbf{1.123}$ & $\mathbf{1.101}$ \\
  & Squared ELU ($(s{+}1)^{2}$) & $\mathbf{0.844}$ & $\mathbf{0.827}$ & $\mathbf{1.127}$ & $\mathbf{1.143}$ \\
  & Linear ELU ($s{+}1$) & $\mathbf{0.815}$ & $\mathbf{0.845}$ & $\mathbf{1.123}$ & $\mathbf{1.074}$ \\
  & Log ELU ($\log(1{+}s){+}1$) & $\mathbf{0.795}$ & $\mathbf{0.806}$ & $\mathbf{1.057}$ & $\mathbf{1.097}$ \\
  \addlinespace
  post-norm, res.\ scale & Softmax ($e^{s}$) & \emph{collapse} & \emph{collapse} & $\mathbf{1.119}$ & $\mathbf{1.097}$ \\
  & Squared ELU ($(s{+}1)^{2}$) & $\mathbf{1.174}$ & $\mathbf{0.863}$ & $\mathbf{1.088}$ & $\mathbf{1.130}$ \\
  & Linear ELU ($s{+}1$) & $\mathbf{0.980}$ & $\mathbf{0.833}$ & $\mathbf{1.082}$ & $\mathbf{1.092}$ \\
  & Log ELU ($\log(1{+}s){+}1$) & $\mathbf{1.006}$ & $\mathbf{0.767}$ & $\mathbf{1.051}$ & $\mathbf{1.147}$ \\
  \bottomrule
  \end{tabular}\par\vspace{8pt}
  \textbf{Mean}\par\vspace{2pt}
  \begin{tabular}{llcccc}
  \toprule
  & & \multicolumn{2}{c}{mirrored Pareto} & \multicolumn{2}{c}{shifted Gaussian} \\
  \cmidrule(lr){3-4}\cmidrule(lr){5-6}
  norm & kernel & none & signed-log & none & signed-log \\
  \midrule
  pre-norm & Softmax ($e^{s}$) & $\mathbf{0.833}$ & $\mathbf{0.779}$ & $\mathbf{1.148}$ & $\mathbf{1.142}$ \\
  & Squared ELU ($(s{+}1)^{2}$) & $\mathbf{0.849}$ & $\mathbf{0.770}$ & $\mathbf{1.129}$ & $\mathbf{1.136}$ \\
  & Linear ELU ($s{+}1$) & $\mathbf{0.828}$ & $\mathbf{0.777}$ & $\mathbf{1.142}$ & $\mathbf{1.164}$ \\
  & Log ELU ($\log(1{+}s){+}1$) & $\mathbf{0.847}$ & $\mathbf{0.749}$ & $\mathbf{1.127}$ & $1.190$ \\
  \addlinespace
  post-norm, res.\ scale & Softmax ($e^{s}$) & \emph{collapse} & \emph{collapse} & $1.181$ & $1.185$ \\
  & Squared ELU ($(s{+}1)^{2}$) & $1.707$ & $\mathbf{0.805}$ & $\mathbf{1.153}$ & $1.192$ \\
  & Linear ELU ($s{+}1$) & $\mathbf{1.317}$ & $\mathbf{0.804}$ & $\mathbf{1.171}$ & $1.189$ \\
  & Log ELU ($\log(1{+}s){+}1$) & $1.474$ & $\mathbf{0.750}$ & $1.189$ & $1.215$ \\
  \bottomrule
  \end{tabular}}
\end{table}

\begin{figure}[H]
  \centering
  \includegraphics[width=\linewidth]{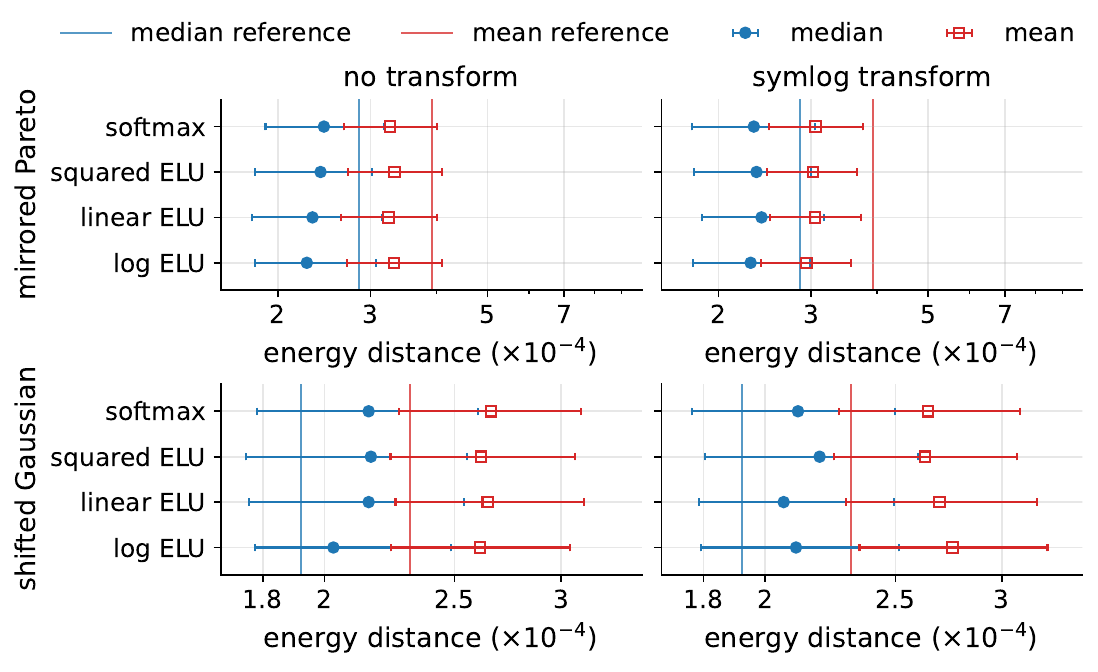}
  \caption{The equivalent of Figure~\ref{fig:expswap} for pre-norm.}
  \label{fig:energysummary}
\end{figure}

\begin{table}[H]
  \centering
  \caption{The counterpart of Table~\ref{tab:energysummary} for the matrix inverse operator on mirrored Pareto marginals.}
  \label{tab:energysummaryinvlin}
  {\small
  \textbf{Median}\par\vspace{2pt}
  \begin{tabular}{llcc}
  \toprule
  & & \multicolumn{2}{c}{mirrored Pareto} \\
  \cmidrule(lr){3-4}
  norm & kernel & none & signed-log \\
  \midrule
  pre-norm & Softmax ($e^{s}$) & $\mathbf{1.085}$ & $\mathbf{1.053}$ \\
  & Squared ELU ($(s{+}1)^{2}$) & $\mathbf{1.145}$ & $\mathbf{1.133}$ \\
  & Linear ELU ($s{+}1$) & $\mathbf{1.143}$ & $\mathbf{1.092}$ \\
  & Log ELU ($\log(1{+}s){+}1$) & $\mathbf{1.181}$ & $\mathbf{1.164}$ \\
  \addlinespace
  post-norm, res.\ scale & Softmax ($e^{s}$) & \emph{collapse} & $\mathbf{1.120}$ \\
  & Squared ELU ($(s{+}1)^{2}$) & $3.423$ & $\mathbf{1.167}$ \\
  & Linear ELU ($s{+}1$) & $\mathbf{1.295}$ & $\mathbf{1.149}$ \\
  & Log ELU ($\log(1{+}s){+}1$) & $\mathbf{1.354}$ & $\mathbf{1.135}$ \\
  \bottomrule
  \end{tabular}\par\vspace{8pt}
  \textbf{Mean}\par\vspace{2pt}
  \begin{tabular}{llcc}
  \toprule
  & & \multicolumn{2}{c}{mirrored Pareto} \\
  \cmidrule(lr){3-4}
  norm & kernel & none & signed-log \\
  \midrule
  pre-norm & Softmax ($e^{s}$) & $\mathbf{0.869}$ & $\mathbf{0.806}$ \\
  & Squared ELU ($(s{+}1)^{2}$) & $\mathbf{0.933}$ & $\mathbf{0.797}$ \\
  & Linear ELU ($s{+}1$) & $\mathbf{0.816}$ & $\mathbf{0.807}$ \\
  & Log ELU ($\log(1{+}s){+}1$) & $\mathbf{0.889}$ & $\mathbf{0.868}$ \\
  \addlinespace
  post-norm, res.\ scale & Softmax ($e^{s}$) & \emph{collapse} & $\mathbf{0.806}$ \\
  & Squared ELU ($(s{+}1)^{2}$) & $3.718$ & $\mathbf{0.849}$ \\
  & Linear ELU ($s{+}1$) & $1.427$ & $\mathbf{0.866}$ \\
  & Log ELU ($\log(1{+}s){+}1$) & $1.466$ & $\mathbf{0.821}$ \\
  \bottomrule
  \end{tabular}}
\end{table}

\begin{figure}[H]
  \centering
  \includegraphics[width=\linewidth]{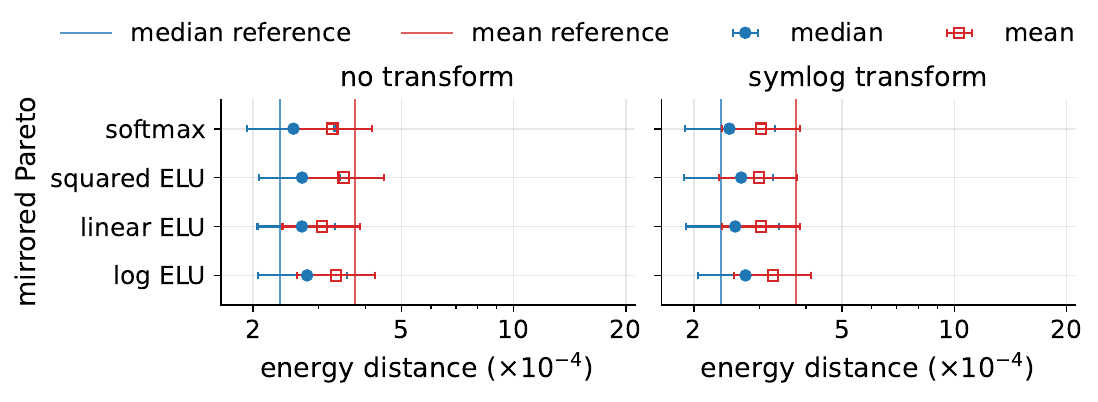}
  \caption{The equivalent of Figure~\ref{fig:expinvlin} for pre-norm.}
  \label{fig:energysummaryinvlin}
\end{figure}

\begin{figure}[H]
  \centering
  \includegraphics[width=0.86\linewidth]{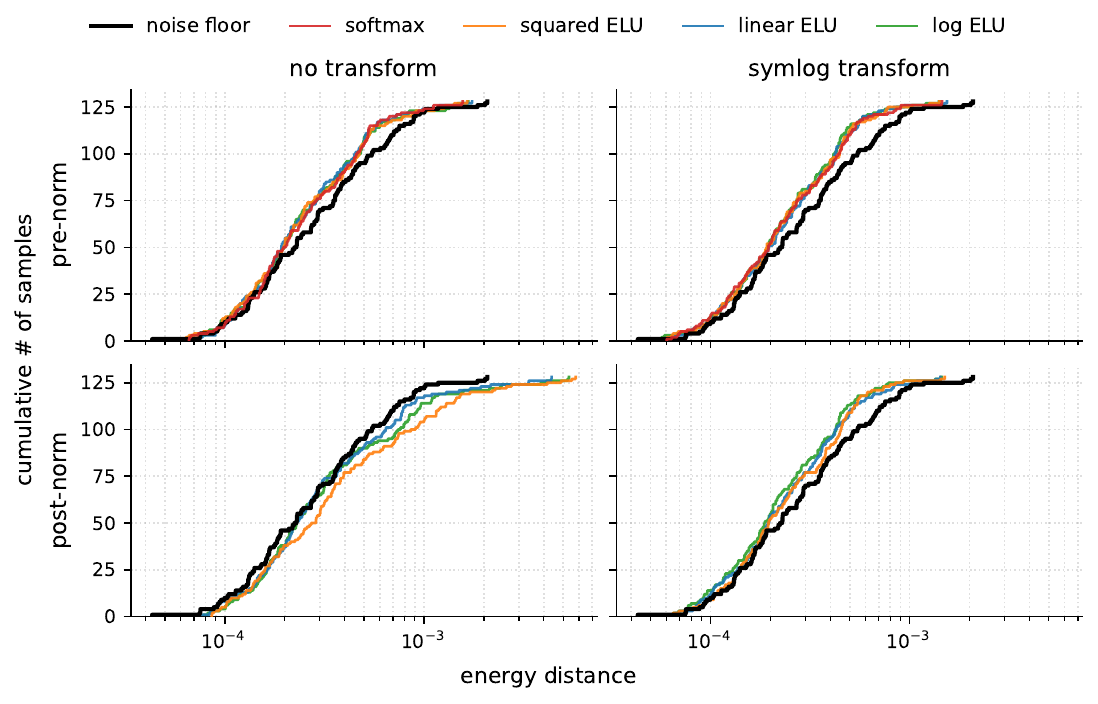}
  \caption{The empirical CDF of the per-trial energy distance for the sheared
  swap operator on mirrored Pareto marginals at $N=16384$, organized by
  normalization placement (rows) and symlog transform (columns). The horizontal
  axis is the energy distance on a log scale and the vertical axis is the
  cumulative count of each curve's $128$ trials, so a curve further left has lower energy distances. The black curve is the noise floor. Softmax is
  absent from
  both post-norm panels because it collapses there.}
  \label{fig:quantile}
\end{figure}

\begin{figure}[H]
  \centering
  \includegraphics[width=0.86\linewidth]{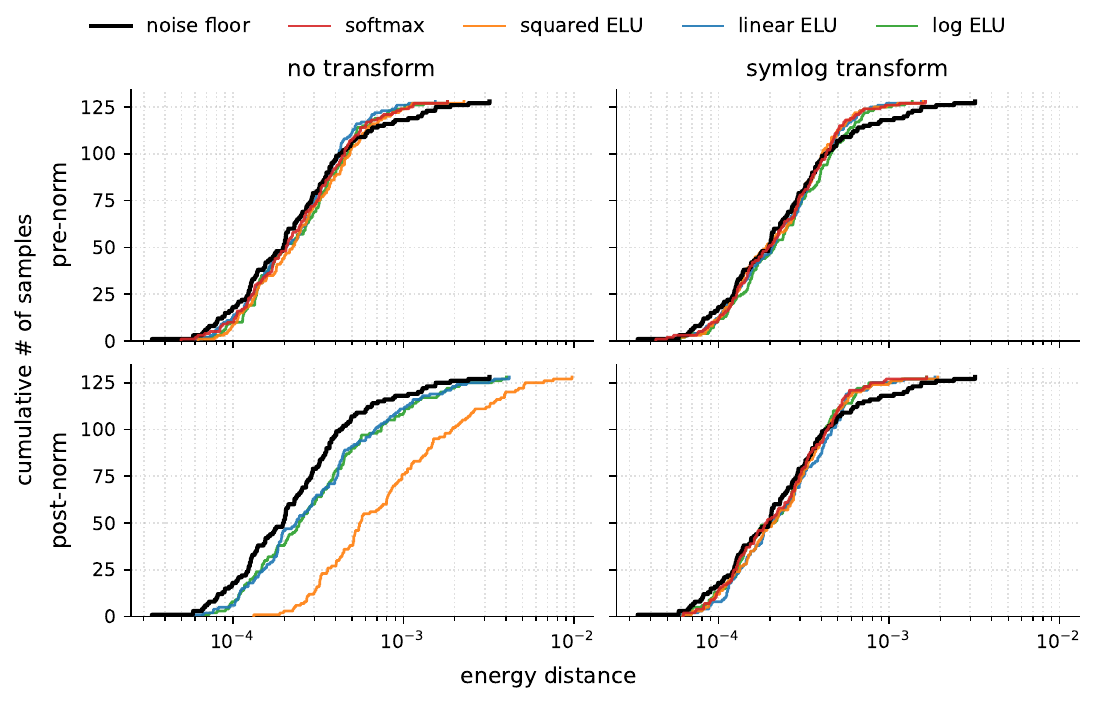}
  \caption{The equivalent of Figure~\ref{fig:quantile} for the matrix inverse
  operator. Here only the post-norm softmax curve without the symlog transform is
  omitted for collapse.}
  \label{fig:quantileinvlin}
\end{figure}

\section{Reading the ablation values}
\label{app:reading}

\subsection{Why a reported score can fall below the sampling reference}
\label{app:ratios}

Every reported number in Section~\ref{sec:ablation} is compared with its matched i.i.d. sampling reference
\eqref{eq:noise-floor}: the energy distance between two independent draws of the
target at the same sample budget. The figures plot the energy distance itself and draw the reference as a vertical line; the tables report the quotient of the two. A score below the reference is not evidence that a model output is better than the target. The following identities explain how a deterministic ensemble can score below this reference.

Write $\delta=\mathbb{E}\|Y-Y'\|$ for independent draws of the target $P$, and
write $\mu_U$ for the empirical measure of a size-$N$ point set $U$. The V-statistic energy distance is the population energy distance between empirical measures,
\begin{equation}
  \widehat{\mathcal{E}^{2}}(U,V) \;=\; \mathcal{E}_1^{2}(\mu_U,\mu_V).
  \label{eq:plugin}
\end{equation}

\paragraph{The i.i.d. sampling reference.}
Let $P_N$ and $P_N'$ be independent size-$N$ i.i.d. samples of $P$. The cross
sum in \eqref{eq:energy} pairs points from two independent samples, so each of
its $N^{2}$ summands has mean $\delta$. A within-sample sum instead pairs a
sample with itself: $N$ of its $N^{2}$ summands are the distance between a point and itself
$\|Y_i-Y_i\|=0$ and the remaining $N^{2}-N$ have mean $\delta$, so the within-sample average has
mean $(1-1/N)\,\delta$. Both within-sample sums carry a minus sign in the energy distance, hence
\begin{equation}
  \mathbb{E}\big[\widehat{\mathcal{E}^{2}}(P_N,P_N')\big]
  \;=\; 2\delta - 2\Bigl(1-\tfrac{1}{N}\Bigr)\delta
  \;=\; \frac{2\delta}{N}.
  \label{eq:floorexp}
\end{equation}

\paragraph{A deterministic output.}
A deterministic model emits a fixed point set $R$ given its input. Let $\mu_R$ be the empirical measure of $R$. Then 
\begin{equation}
  \mathbb{E}\big[\widehat{\mathcal{E}^{2}}(R, P_N)\big]
  \;=\; \mathcal{E}_1^{2}(\mu_R, P) \;+\; \delta/N ,
  \label{eq:fixedset}
\end{equation}
where the average pairwise distance within $R$ equals $\mathbb{E}\|U-U^{\prime}\|$ for independent $U,U^{\prime}\sim\mu_R$. Thus, when $\mathcal{E}_1^{2}(\mu_R, P)$ is below $\delta/N$ the left hand side of \eqref{eq:fixedset} can be lower than the i.i.d. sampling reference, causing the reported score to fall below the reference line and the reported ratio to fall below $1$.

\subsection{The bootstrap interval}
\label{app:bootstrap}

Every reported number in Section~\ref{sec:ablation} summarizes $n=128$ evaluation trials. Trial $i$ draws its
own parameter $\omega^{(i)}$, then produces an input ensemble $X^{(i)} \sim \mu_I(\omega^{(i)})^{\otimes N}$ and a
target ensemble $Y_1^{(i)} \sim \mu_O(\omega^{(i)})^{\otimes N}$ each of $N=16384$ points. The per-trial energy distance is $d_i = \mathcal{E}^{2}(\widetilde{\Psi}_{\theta}(X^{(i)}), Y_1^{(i)})$. To compute a noise floor, the process is repeated for another $n$ trials, indexed by $j$, where the energy distance $\hat{d}_j = \mathcal{E}^{2}(Y_2^{(j)}, Y_3^{(j)})$ is computed from two independent target ensembles $Y_2, Y_3 \sim \mu_O(\omega^{(j)})^{\otimes N}$. The reported ratio is then computed,
\begin{equation}
  \widehat{r} \;=\; \frac{S\big(d_1,\dots,d_n\big)}{S\big(\hat{d}_1,\dots,\hat{d}_n\big)},
  \qquad S \in \{\operatorname{median}, \operatorname{mean}\},
  \label{eq:ratiostat}
\end{equation}
so the median ratio divides the median candidate distance by the median reference distance, while the mean ratio uses the corresponding means.

For each operator and marginal family, we use the same sampling reference for all models.

We also compute a nonparametric bootstrap over trials. For $b=1,\dots,B$ with
$B=2000$, we draw $i^{(b)}_1,\dots,i^{(b)}_n$ and $j^{(b)}_1,\dots,j^{(b)}_n$
uniformly with replacement from
$\{1,\dots,n\}$, form
\begin{equation}
  r_b \;=\;
  \frac{S\big(d_{i^{(b)}_1},\dots,d_{i^{(b)}_n}\big)}
       {S\big(\hat{d}_{j^{(b)}_1},\dots,\hat{d}_{j^{(b)}_n}\big)},
  \label{eq:bootdraw}
\end{equation}
and report the empirical $2.5\%$ and $97.5\%$ quantiles of $r_1,\dots,r_B$ as the
$95\%$ interval.

The figures show this same interval on the scale of the energy distance: both endpoints are multiplied through by the reference summary $S\big(\hat{d}_1,\dots,\hat{d}_n\big)$, which is the vertical line the bar is read against. An interval that contains or lies below the reference corresponds to a bold entry in Tables~\ref{tab:energysummary} and~\ref{tab:energysummaryinvlin}.

These intervals describe variability across evaluation task instances at fixed trained weights. For heavy-tailed mean ratios, they are descriptive bootstrap ranges; their nominal $95\%$ coverage is not established. They do not quantify variation across training runs (Section~\ref{sec:limitations}).

\section{Ensemble collapse}
\label{app:collapse}

Figure~\ref{fig:collapse} displays the output of a collapsed model, and
Figure~\ref{fig:curves} displays when collapse occurs during training.

\begin{figure}[H]
  \centering
  \includegraphics[width=\linewidth]{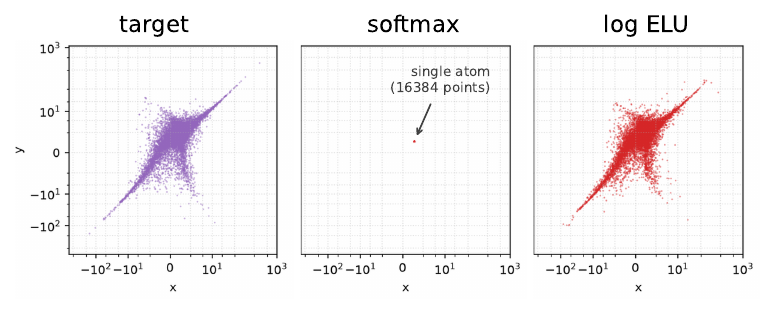}
  \caption{Model outputs for the sheared swap operator on mirrored Pareto
  marginals, using post-norm and no symlog transform, for a single evaluation
  instance of $16384$ points. Left: the target measure. Center: the softmax
  output, concentrated near the single location marked by the
  arrow. Right: the log-ELU output. All three panels share symmetric-log axes on
  both coordinates.}
  \label{fig:collapse}
\end{figure}

\subsection{Training curves}
\label{app:curves}

\begin{figure}[H]
  \centering
  \includegraphics[width=\linewidth]{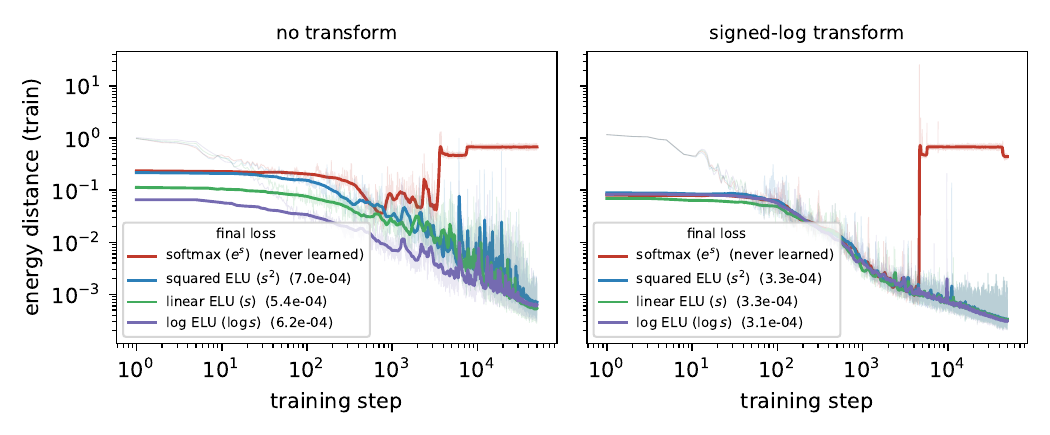}
  \caption{Training curves for the four attention kernels of
  Table~\ref{tab:family} on the sheared swap operator with mirrored Pareto
  marginals, using post-norm, with and without the symlog transform. Faint
  traces are the per-step loss on a fresh batch and heavy traces are a rolling
  median. Softmax (red) descends alongside the other three kernels, reaching
  $1.8\times10^{-2}$ at step $733$ without the transform and $6.3\times10^{-4}$
  at step $3931$ with it, then diverges abruptly at step ${\approx}3500$ and
  ${\approx}4500$ to the collapsed value it holds for the remaining $45{,}000$
  steps. The other three kernels continue to descend. Collapse is therefore not
  a consequence of stopping training early.}
  \label{fig:curves}
\end{figure}

\section{Hill-based tail diagnostics}
\label{app:tailstat}

We complement the aggregate energy distances with a diagnostic of the upper-tail samples. For each model, we compare its discrepancy from a target Hill estimate with the discrepancy between two independent target Hill estimates.

\subsection{The statistic}
\label{app:hillstat}

Fix an attention kernel, a normalization placement, a transform, an
operator, a marginal family, a coordinate $c\in\{0,1\}$ and an evaluation size
$N$. Write $Z_{(1)}\ge\cdots\ge Z_{(N)}$ for the descending order statistics of
coordinate $c$ of a size-$N$ ensemble $Z$. The Hill estimate of that coordinate's
tail index is
\begin{equation}
  \widehat{a}_{c}(Z)
    \;=\;\Bigl(\frac{1}{k}\sum_{\ell=1}^{k}
        \log\frac{Z_{(\ell)}}{Z_{(k+1)}}\Bigr)^{-1},
  \qquad k=\lfloor \phi N\rfloor ,
  \label{eq:hill}
\end{equation}
where $\phi=0.1$ at $N=16384$ and $\phi=0.01$ at $N=65536$. The two settings use $k=1638$ and $k=655$, respectively. We apply a numerical floor of $10^{-16}$ to the sample values used in the logarithmic ratios; all Hill estimates in the reported evaluations were finite. We use this diagnostic only for mirrored Pareto marginals, whose tail-index parameter is $a$ in \eqref{eq:Pareto}.

We chose these tail fractions from Figure~\ref{fig:metricres}, where the energy
distance responds to tail differences at $q_0=0.9$ for $N=16384$ and at
$q_0=0.99$ for $N=65536$.

In trial $i$ of $n=128$, we draw a parameter $\omega^{(i)}$, an input ensemble $X^{(i)}\sim\mu_I(\omega^{(i)})^{\otimes N}$ and an independent target ensemble $Y_1^{(i)}\sim\mu_O(\omega^{(i)})^{\otimes N}$. The output transformation is $T_i=M(\omega_T^{(i)})P$ for the sheared swap and $T_i=B(\omega_T^{(i)})^{-1}$ for the matrix inverse. We apply $T_i^{-1}$ to each output point before computing the Hill estimate, recovering the latent coordinate order. For the row-sample matrices used here, write
\[
  \mathcal{H}_{i,c}(Y)=\widehat a_c\bigl(Y(T_i^{-1})^\top\bigr).
\]
Axis $0$ has the lighter tail ($a\in[2,3]$) and axis $1$ the heavier tail ($a\in[1+\epsilon,2]$) throughout this appendix. The candidate discrepancy is

\begin{equation}
  e_i \;=\; \bigl|\mathcal{H}_{i,c}\bigl(\widetilde{\Psi}_{\theta}(X^{(i)})\bigr)
           -\mathcal{H}_{i,c}\bigl(Y_1^{(i)}\bigr)\bigr|.
  \label{eq:hillerr}
\end{equation}
An independent pool of $n$ trials draws parameters $\omega^{(j)}$ and two independent target ensembles $Y_2^{(j)},Y_3^{(j)}\sim\mu_O(\omega^{(j)})^{\otimes N}$. Using each trial's output transformation $T_j$, the reference discrepancy is

\begin{equation}
  \hat{e}_j \;=\; \bigl|\mathcal{H}_{j,c}\bigl(Y_2^{(j)}\bigr)
           -\mathcal{H}_{j,c}\bigl(Y_3^{(j)}\bigr)\bigr|.
  \label{eq:hillerrref}
\end{equation}
We compare $e$ and $\hat{e}$ using the U-statistic estimator of the one-dimensional energy distance with $p=1$:
\begin{equation}
  E(e,\hat{e})
    \;=\;\frac{2}{n^2}\sum_{i,j}\bigl|e_i-\hat{e}_j\bigr|
    \;-\;\frac{1}{n(n-1)}\sum_{i\neq i'}\bigl|e_i-e_{i'}\bigr|
    \;-\;\frac{1}{n(n-1)}\sum_{j\neq j'}\bigl|\hat{e}_j-\hat{e}_{j'}\bigr| .
  \label{eq:hillenergystat}
\end{equation}
For independent i.i.d. trial pools with finite first moments of $e$ and $\hat e$, \eqref{eq:hillenergystat} is unbiased for the energy distance between their distributions. If these distributions agree, its expectation is zero at finite $n$. We use it as a descriptive measure of distributional discrepancy: it does not determine which group has smaller Hill errors, and its sign alone is not a significance test. Finite-sample estimates may be negative. Appendix~\ref{app:hillresults} reports the results.
\subsection{Results}
\label{app:hillresults}

\begin{table}[H]
  \centering
  \caption{$E(e, \hat{e})$ from \eqref{eq:hillenergystat} at $N=16384$ for the sheared swap operator. The statistic has expectation $0$ when the distributions of candidate and reference Hill errors agree. Axis $0$
  carries the lighter tail ($a\in[2,3]$) and axis $1$ the heavier ($a\in[1+\epsilon,2]$).
  The two softmax post-norm entries are the collapsed runs of
  Table~\ref{tab:energysummary}.}
  \label{tab:hillenergy}
  {\small
  \textbf{Axis 0} (lighter tail)\par\vspace{2pt}
  \begin{tabular}{llcc}
  \toprule
  & & \multicolumn{2}{c}{mirrored Pareto} \\
  \cmidrule(lr){3-4}
  norm & kernel & none & signed-log \\
  \midrule
  pre-norm & Softmax ($e^{s}$) & $7.6{\times}10^{-3}$ & $2.5{\times}10^{-3}$ \\
  & Squared ELU ($(s{+}1)^{2}$) & $5.6{\times}10^{-3}$ & $1.5{\times}10^{-3}$ \\
  & Linear ELU ($s{+}1$) & $2.1{\times}10^{-3}$ & $6.7{\times}10^{-4}$ \\
  & Log ELU ($\log(1{+}s){+}1$) & $8.1{\times}10^{-3}$ & $9.3{\times}10^{-4}$ \\
  \addlinespace
  post-norm, res.\ scale & Softmax ($e^{s}$) & $4.3{\times}10^{8}$ & $1.8{\times}10^{5}$ \\
  & Squared ELU ($(s{+}1)^{2}$) & $3.5{\times}10^{-2}$ & $1.6{\times}10^{-3}$ \\
  & Linear ELU ($s{+}1$) & $2.2{\times}10^{-2}$ & $1.8{\times}10^{-3}$ \\
  & Log ELU ($\log(1{+}s){+}1$) & $3.5{\times}10^{-2}$ & $-5.2{\times}10^{-4}$ \\
  \bottomrule
  \end{tabular}\par\vspace{8pt}
  \textbf{Axis 1} (heavier tail)\par\vspace{2pt}
  \begin{tabular}{llcc}
  \toprule
  & & \multicolumn{2}{c}{mirrored Pareto} \\
  \cmidrule(lr){3-4}
  norm & kernel & none & signed-log \\
  \midrule
  pre-norm & Softmax ($e^{s}$) & $-7.2{\times}10^{-5}$ & $-1.2{\times}10^{-4}$ \\
  & Squared ELU ($(s{+}1)^{2}$) & $-1.5{\times}10^{-4}$ & $7.2{\times}10^{-5}$ \\
  & Linear ELU ($s{+}1$) & $3.8{\times}10^{-4}$ & $5.2{\times}10^{-5}$ \\
  & Log ELU ($\log(1{+}s){+}1$) & $3.5{\times}10^{-6}$ & $5.5{\times}10^{-5}$ \\
  \addlinespace
  post-norm, res.\ scale & Softmax ($e^{s}$) & $1.6{\times}10^{9}$ & $7.0{\times}10^{3}$ \\
  & Squared ELU ($(s{+}1)^{2}$) & $-2.4{\times}10^{-4}$ & $8.7{\times}10^{-5}$ \\
  & Linear ELU ($s{+}1$) & $-1.1{\times}10^{-4}$ & $2.1{\times}10^{-4}$ \\
  & Log ELU ($\log(1{+}s){+}1$) & $-2.6{\times}10^{-4}$ & $2.3{\times}10^{-5}$ \\
  \bottomrule
  \end{tabular}}
\end{table}

\begin{table}[H]
  \centering
  \caption{The equivalent of Table~\ref{tab:hillenergy} for the matrix inverse
  operator.}
  \label{tab:hillenergyinvlin}
  {\small
  \textbf{Axis 0} (lighter tail)\par\vspace{2pt}
  \begin{tabular}{llcc}
  \toprule
  & & \multicolumn{2}{c}{mirrored Pareto} \\
  \cmidrule(lr){3-4}
  norm & kernel & none & signed-log \\
  \midrule
  pre-norm & Softmax ($e^{s}$) & $1.8{\times}10^{-2}$ & $5.1{\times}10^{-4}$ \\
  & Squared ELU ($(s{+}1)^{2}$) & $8.8{\times}10^{-3}$ & $4.7{\times}10^{-5}$ \\
  & Linear ELU ($s{+}1$) & $4.5{\times}10^{-3}$ & $3.5{\times}10^{-3}$ \\
  & Log ELU ($\log(1{+}s){+}1$) & $8.1{\times}10^{-3}$ & $-3.4{\times}10^{-4}$ \\
  \addlinespace
  post-norm, res.\ scale & Softmax ($e^{s}$) & $2.1{\times}10^{8}$ & $-2.5{\times}10^{-4}$ \\
  & Squared ELU ($(s{+}1)^{2}$) & $1.4{\times}10^{-1}$ & $2.4{\times}10^{-3}$ \\
  & Linear ELU ($s{+}1$) & $3.2{\times}10^{-2}$ & $1.1{\times}10^{-3}$ \\
  & Log ELU ($\log(1{+}s){+}1$) & $3.6{\times}10^{-2}$ & $1.7{\times}10^{-4}$ \\
  \bottomrule
  \end{tabular}\par\vspace{8pt}
  \textbf{Axis 1} (heavier tail)\par\vspace{2pt}
  \begin{tabular}{llcc}
  \toprule
  & & \multicolumn{2}{c}{mirrored Pareto} \\
  \cmidrule(lr){3-4}
  norm & kernel & none & signed-log \\
  \midrule
  pre-norm & Softmax ($e^{s}$) & $3.3{\times}10^{-4}$ & $1.3{\times}10^{-4}$ \\
  & Squared ELU ($(s{+}1)^{2}$) & $-1.0{\times}10^{-4}$ & $-1.1{\times}10^{-4}$ \\
  & Linear ELU ($s{+}1$) & $-8.3{\times}10^{-6}$ & $2.2{\times}10^{-4}$ \\
  & Log ELU ($\log(1{+}s){+}1$) & $1.2{\times}10^{-4}$ & $-1.8{\times}10^{-4}$ \\
  \addlinespace
  post-norm, res.\ scale & Softmax ($e^{s}$) & $8.9{\times}10^{7}$ & $-1.9{\times}10^{-4}$ \\
  & Squared ELU ($(s{+}1)^{2}$) & $5.9{\times}10^{-3}$ & $-2.3{\times}10^{-4}$ \\
  & Linear ELU ($s{+}1$) & $5.9{\times}10^{-4}$ & $-1.1{\times}10^{-4}$ \\
  & Log ELU ($\log(1{+}s){+}1$) & $-5.7{\times}10^{-5}$ & $2.5{\times}10^{-4}$ \\
  \bottomrule
  \end{tabular}}
\end{table}

\begin{figure}[H]
  \centering
  \includegraphics[width=\linewidth]{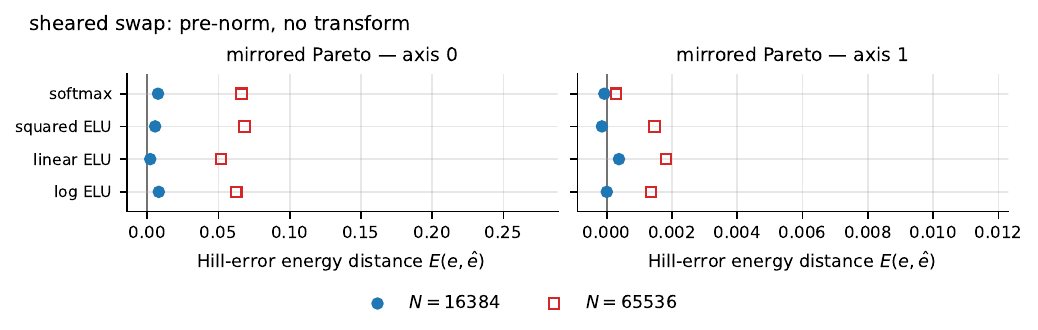}\par\vspace{6pt}
  \includegraphics[width=\linewidth]{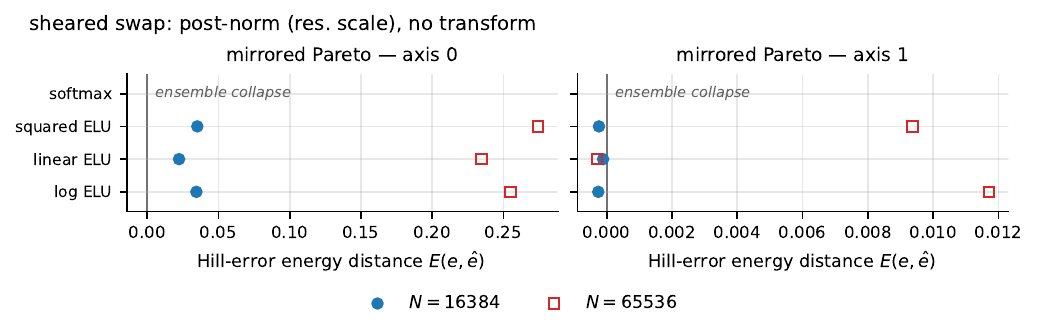}
  \caption{$E(e, \hat{e})$ for the sheared swap operator without
  the symlog transform, using pre-norm (top) and post-norm (bottom). The vertical line marks $0$. Larger population values indicate a greater difference between the two distributions of Hill errors; the finite-trial estimate can be negative. The two axes are scaled separately. Appendix~\ref{app:hillstat} defines the sample sizes and tail fractions. Runs that exhibited ensemble collapse in Section~\ref{sec:ablation} are again labeled "ensemble collapse".}
  \label{fig:hillenergy}
\end{figure}

\begin{figure}[H]
  \centering
  \includegraphics[width=\linewidth]{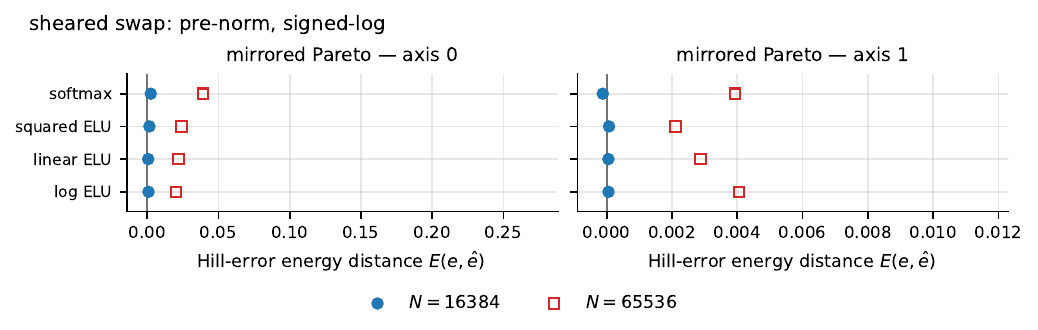}\par\vspace{6pt}
  \includegraphics[width=\linewidth]{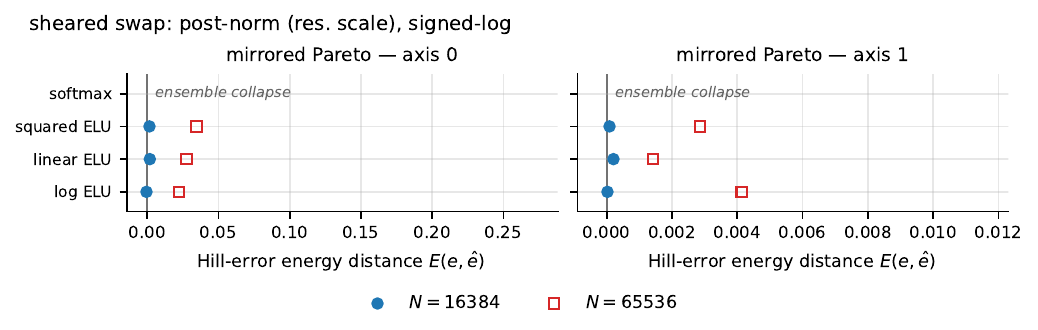}
  \caption{The equivalent of Figure~\ref{fig:hillenergy} with the symlog
  transform.}
  \label{fig:hillenergylog}
\end{figure}

\begin{figure}[H]
  \centering
  \includegraphics[width=\linewidth]{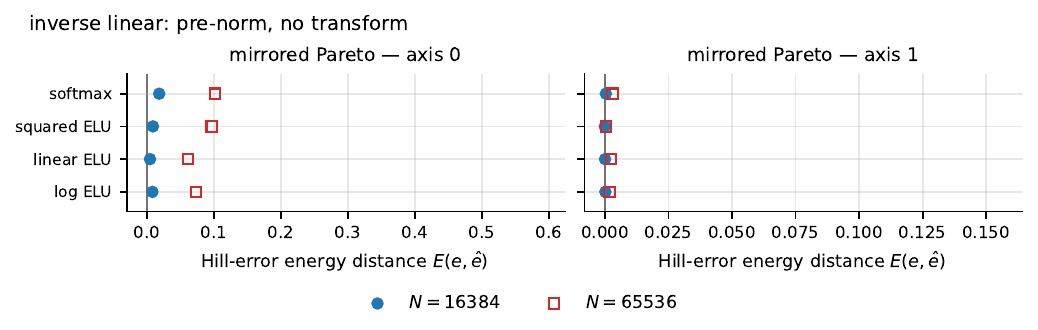}\par\vspace{6pt}
  \includegraphics[width=\linewidth]{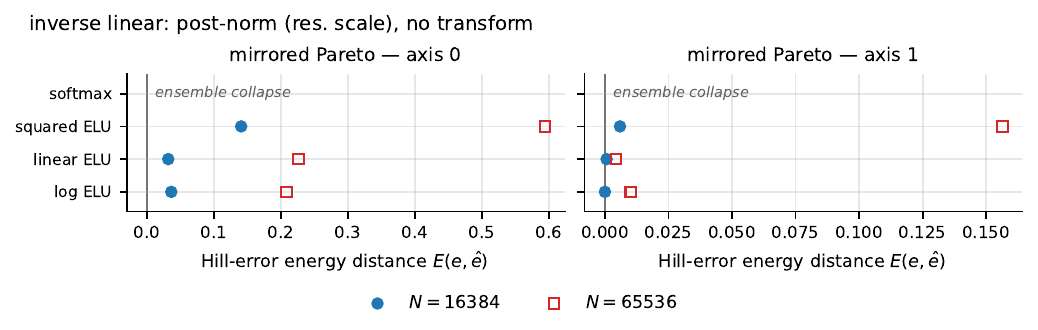}
  \caption{The equivalent of Figure~\ref{fig:hillenergy} for the matrix inverse
  operator.}
  \label{fig:hillenergyinvlin}
\end{figure}

\begin{figure}[H]
  \centering
  \includegraphics[width=\linewidth]{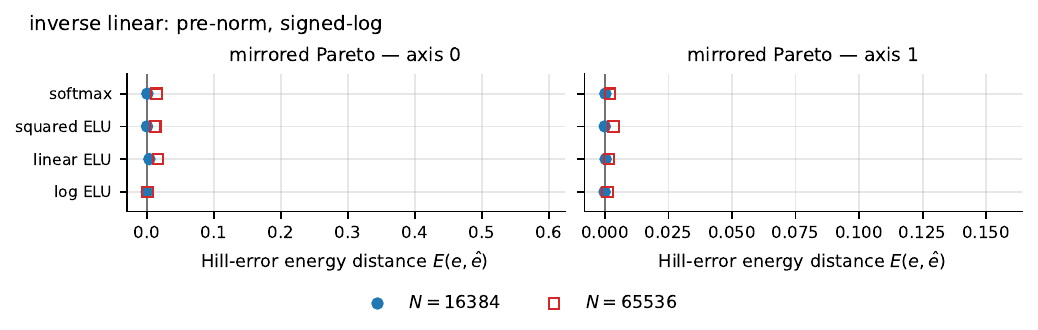}\par\vspace{6pt}
  \includegraphics[width=\linewidth]{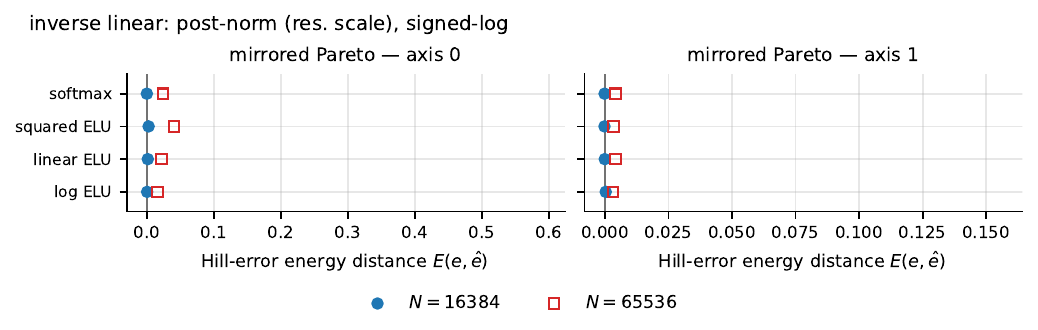}
  \caption{The equivalent of Figure~\ref{fig:hillenergy} for the matrix
  inverse operator and symlog transform.}
  \label{fig:hillenergyinvlinlog}
\end{figure}

\end{document}